\definecolor{orange}{rgb}{1,0.4,0.0}
\DeclarePairedDelimiterXPP{\KL}[2]{D_\textnormal{KL}}{(}{)}{}{%
#1\:\delimsize\|\:#2%
}
\DeclarePairedDelimiterXPP{\RD}[2]{D_{$\alpha$}}{(}{)}{}{%
#1\:\delimsize\|\:#2%
}
\DeclarePairedDelimiterXPP\Prob[1]{\mathbb{P}}{\lbrace}{\rbrace}{}{

#1}
\DeclarePairedDelimiterXPP{\lnorm}[2]{}{\lVert}{\rVert}{_{#2}}{#1}
\newcommand{\bA}{\ensuremath{\mathbb{A}}}
\newcommand{\bE}{\ensuremath{\mathbb{E}}}
\newcommand{\bN}{\ensuremath{\mathbb{N}}}
\newcommand{\bP}{\ensuremath{\mathbb{P}}}
\newcommand{\bQ}{\ensuremath{\mathbb{Q}}}
\newcommand{\bR}{\ensuremath{\mathbb{R}}}
\newcommand{\cA}{\ensuremath{\mathcal{A}}}
\newcommand{\cB}{\ensuremath{\mathcal{B}}}
\newcommand{\cE}{\ensuremath{\mathcal{E}}}
\newcommand{\cK}{\ensuremath{\mathcal{K}}}
\newcommand{\cO}{\ensuremath{\mathcal{O}}}
\newcommand{\cP}{\ensuremath{\mathcal{P}}}
\newcommand{\cS}{\ensuremath{\mathcal{S}}}
\newcommand{\cW}{\ensuremath{\mathcal{W}}}
\newcommand{\cZ}{\ensuremath{\mathcal{Z}}}
\newcommand{\ccR}{\ensuremath{\mathscr{R}}}
\newcommand{\rmd}{\ensuremath{\mathrm{d}}}
\newcommand{\minf}{\textup{I}}
\newcommand{\relent}{\textup{D}}
\newcommand{\relentber}{\textup{d}}
\newcommand{\comp}{\mathfrak{C}_{n,\beta,S}}
\newcommand{\poprisk}{\ccR}
\newcommand{\emprisk}{\widehat{\ccR}}
\newtheoremstyle{mytheoremstyle} %
    {\topsep}                    %
    {\topsep}                    %
    {\itshape}                   %
    {}                           %
    {\bf}                        %
    {.}                          %
    {.5em}                       %
    {}  %
\DeclareMathOperator*{\esssup}{ess\,sup}
\theoremstyle{mytheoremstyle}
\newtheorem{lemma}{Lemma}
\crefname{lemma}{Lemma}{Lemmata}
\newtheorem{theorem}{Theorem}
\title{A note on generalization bounds \\ for losses with finite moments}
\author{%
    \IEEEauthorblockN{Borja Rodríguez-Gálvez\IEEEauthorrefmark{1},
                      Omar Rivasplata\IEEEauthorrefmark{2},
                      Ragnar Thobaben\IEEEauthorrefmark{1},
                      Mikael Skoglund\IEEEauthorrefmark{1}}
    \IEEEauthorblockA{\IEEEauthorrefmark{1}%
        KTH Royal Institute of Technology, \{borjarg, ragnart, skoglund\}@kth.se}
    \IEEEauthorblockA{\IEEEauthorrefmark{2}%
        UCL, o.rivasplata@ucl.ac.uk}
}
\date{}
\begin{document}

\maketitle

\begin{abstract}
  This paper studies the truncation method from~\citet{alquier2006transductive} to derive high-probability PAC-Bayes bounds for unbounded losses with heavy tails. Assuming that the $p$-th moment is bounded, the resulting bounds interpolate between a slow rate $\nicefrac{1}{\sqrt{n}}$ when $p=2$, and a fast rate $\nicefrac{1}{n}$ when $p \to \infty$ and the loss is essentially bounded. Moreover, the paper derives a high-probability PAC-Bayes bound for losses with a bounded variance. This bound has an exponentially better dependence on the confidence parameter and the dependency measure than previous bounds in the literature. Finally, the paper extends all results to guarantees in expectation and single-draw PAC-Bayes. In order to so, it obtains analogues of the PAC-Bayes fast rate bound for bounded losses from~\citep{rodriguezgalvez2023pacbayes} in these settings.
\end{abstract}

\section{Introduction}
\label{sec:introduction}

\looseness=-1 Consider a sequence of $n$ instances $s = (z_1, \ldots, z_n) \in \cZ^n$ of a problem with instance space $\cZ$. A \emph{learning algorithm} $\bA$ is a (possibly randomized) mechanism that generates a hypothesis $w \in \cW$ of the solution of the problem when it is given the sequence $s$, which is commonly referred to as the the \emph{training set}. 
The performance of a hypothesis $w$ on an instance %
$z$ is evaluated by a loss function 
$\ell : \cW\times\cZ \to \bR_+$ 
so that smaller values of $\ell(w,z)$ indicate a better performance of the hypothesis $w$ on the problem instance $z$, while larger values indicate a worse performance. 
Assume the instances of the problem follow a distribution $\bP_Z$; the goal of the learning algorithm is to produce a hypothesis $w$ that has as low as possible expected loss on samples $Z$ from the %
distribution $\bP_Z$, that is, a small \emph{population risk} $\poprisk(w) \coloneqq \bE \ell(w,Z)$.

\looseness=-1 Often, we do not have a direct access to the problem distribution $\bP_Z$, and hence calculating the population risk is unfeasible. Nonetheless, we can employ the available training set $s$ to construct 
an estimate of the population risk and bound its deviation. 
A common estimate is the \emph{empirical risk} $\emprisk(w,s) \coloneqq \frac{1}{n} \sum_{i=1}^n \ell(w,z_i)$, which is the average loss of the hypothesis $w$ on the training set instances $z_i$. 
Notice that the population risk can be decomposed as $\poprisk(w) = \emprisk(w,s) +  \big( \poprisk(w) - \emprisk(w,s) \big)$, where the second term is usually referred to as the generalization gap.

\looseness=-1 \emph{Probably approximately correct} (PAC) theory studies bounds on the generalization gap that hold with a probability larger than a user-chosen threshold. Classically, these bounds hold uniformly for all elements of a hypothesis class $\cW$ and only depend on the complexity of the said class, which is measured, for example, by the Vapnik--Cherovenkis (VC) dimension or the Rademacher complexity. See~\cite{shalev2014understanding} for an introduction to the topic.

\looseness=-1 In this paper, we consider \emph{PAC-Bayesian bounds}~\citep{shawe1997pac,mcallester1998some,mcallester1999pac,mcallester2003pac}. This framework considers the algorithm as a Markov kernel $\bP_W^S$ that returns a distribution $\bP_W^{S=s}$ on the hypothesis class, for every dataset realization $s$. Then, the resulting bounds depend not only on the hypothesis class, but also on the dependence of the hypothesis $W = \bA(S)$ on the random training set $S$. We are interested in the case of unbounded losses.

\subsection{PAC-Bayesian bounds}
\label{subsec:pac-bayesian-bounds}

The original PAC-Bayesian bound of~\citet{mcallester1998some, mcallester1999pac, mcallester2003pac} 
assumes bounded losses $\ell(w,z) \in [0,1]$ and
states that if $\bQ_W$ is a distribution on $\cW$, independent of the training set $S$, and $\beta \in (0,1)$ is a confidence parameter, then, with probability no smaller than $1 - \beta$
over the random training set $S \sim \bP_S = \bP_Z^{\otimes n},$
\begin{equation}
    \label{eq:mc_allester}
    \bE^S \poprisk(W) \leq \bE^S \emprisk(W,S) + \sqrt{\frac{\relent(\bP_W^S \Vert \bQ_W) + \log \frac{\xi(n)}{\beta}}{2n}}
\end{equation}
holds \emph{simultaneously} $\forall\ \bP_W^S \in \cP$,
where $\xi(n) \in [\sqrt{n}, 2 + \sqrt{2n}]$~\citep{maurer2004note, germain2015risk, rodriguezgalvez2023pacbayes}, $\cP$ is the set of all Markov kernels $\bP_W^S$ from $\cS$ to distributions on $\cW$ such that $\bP_W^S \ll \bQ_W$,
and $\bE^S$ denotes the conditional expectation operator with respect to the $\sigma$-algebra induced by $S$. The dependency of the hypothesis on the dataset is measured by the relative entropy $\relent( \bP_W^S \Vert \bQ_W)$ of the algorithm's hypothesis kernel $\bP_W^S$, or \emph{posterior}, with respect to %
the data-independent distribution $\bQ_W$, or \emph{prior}, on the hypothesis space. When the confidence penalty is logarithmic, that is, $\log \nicefrac{1}{\beta}$, we say that the bound is of \emph{high probability}.

Note that the PAC-Bayesian guarantee from~\eqref{eq:mc_allester} is on the algorithm's output distribution $\bP_W^S$, and not on any particular realization from it. To simplify the notation, in the rest of the paper we will use $\poprisk \coloneqq \bE^S \poprisk(W)$, $\emprisk \coloneqq \bE^S \emprisk(W,S)$, $\relent \coloneqq \relent(\bP_W^S \Vert \bQ_W)$ and $\comp \coloneqq \relent + \log \nicefrac{\xi(n)}{\beta}$; while understanding that these quantities are random variables whose randomness comes from the random training set $S$.

There have been multiple efforts to generalize McAllester's bound~\eqref{eq:mc_allester} to unbounded losses. These results often require some assumptions on the tail behavior of the random loss $\ell(w,Z)$ with respect to the problem distribution $\bP_Z$ and generalize classical concentration inequalities to the PAC-Bayesian setting. For example, the \emph{cumulative generating function} (CGF) $\Lambda_{\ell(w,Z)}(\lambda) \coloneqq \log \bE \exp \big( \lambda( \ell(w,Z) - \bE \ell(w,Z) \big)$ completely characterizes the tails of $\ell(w,Z)$ for fixed $w$. 
The \emph{Cramér-Chernoff method} determines the connection between the CGF and the tails behavior~\citep[Section 2.3]{boucheron2013concentration}.  
More precisely, if there is a convex and continuously differentiable function $\psi(\lambda)$ defined on $[0,b)$ for some $b \in \bR$ such that $\psi(0)=\psi'(0) = 0$ and
$\Lambda_{-\ell(w,Z)}(\lambda) \leq \psi(\lambda)$ 
for all $\lambda \in [0,b)$, then the \emph{Chernoff inequality} establishes that $\bE \ell(w,Z) \leq \ell(w,Z) + \psi_*^{-1}(\log \nicefrac{1}{\beta})$ with probability no smaller than $1 - \beta$. 
In \citep[Corollary 15]{rodriguezgalvez2023pacbayes}, the authors build on~\citep{zhang2006information, banerjee2021information} to derive a PAC-Bayesian analogue to the Chernoff inequality accounting for the dependence of the training set $S$ and the hypothesis $W$. Namely, with probability no smaller than $1 - \beta$, 
\begin{equation}
    \label{eq:pac_bayes_chernoff_analogue}
    \poprisk \leq \emprisk + \psi_*^{-1} \bigg( \frac{1.1 \relent + \log \frac{10 e \pi^2}{\beta}}{n} \bigg)
\end{equation}
holds \emph{simultaneously} $\forall \ \bP_W^S \in \cP$.
Some examples of losses with a bounded CGF include both \emph{sub-Gaussian} and \emph{sub-exponential} losses, which were also studied individually in~\cite{catoni2004statistical, alquier2006transductive, hellstrom2020generalization, guedj2021still}.

A weaker assumption is to consider losses with bounded moments for all hypotheses $w \in \cW$. 
For a fixed hypothesis $w$, the \emph{$p$-th (raw) moment} of the loss is $\bE \ell(w,Z)^p$. The assumption of bounded moments is weaker since if the CGF exists, then all the moments are bounded. However, the reverse is not true: for example, the log-normal distribution has bounded moments of all orders, but it does not have a CGF~\citep[Chapter 14]{johnson1994continuous}~\citep{asmussen2016laplace}. 
The smaller the order of the bounded moment, the weaker the assumption as $\bE \ell(w,Z)^p \leq \bE \ell(w,Z)^q$ for all $p \leq q$. When the loss has a bounded $p$-th moment but it does not have a CGF, the loss is said to have a \emph{heavy tail}. There are works that obtain PAC-Bayesian bounds similar to~\eqref{eq:pac_bayes_chernoff_analogue} assuming a bounded 2nd moment \citep{wang2015pac, haddouche2023pacbayes, chugg2023unified, alquier2006transductive} or a bounded 2nd and 3rd moments~\citep{holland2019pac}. \citet{alquier2018simpler} also developed PAC-Bayesian bounds for losses with bounded moments, but they considered the \emph{$p$-th central moment} $\bE | \ell(w,Z) - \bE \ell(w,Z) | ^ p$, which can be much smaller than the raw moment. 
However, in these bounds
the confidence penalty $\nicefrac{1}{\beta}$ is linear and not logarithmic, and they consider other $f$-divergences as the dependency measure.

Finally, \citet{haddouche2021pac} considered a different kind of condition called the hypothesis-dependent range (HYPE), which states that there is a function $\kappa$ with positive range such that $\sup_{z \in \cZ} \ell(w,z) \leq \kappa(w)$ for all hypotheses $w \in \cW$;  but their bounds decrease at a slower rate than \eqref{eq:mc_allester}
when they are restricted to the bounded case.

\subsection{Contributions}
\label{subsed:contributions}

In this paper, we build upon \citet{alquier2006transductive}'s truncation method and demonstrate its potential. This method consists of studying a \emph{truncated} version of the loss. 
To this effect, let 
\begin{equation}
    \label{eq:alquier_truncated_loss}
    \ell^-_{\nicefrac{n}{\lambda}}(w,z) \coloneqq \min \{ \ell(w,z), \nicefrac{n}{\lambda} \} 
\end{equation}
and
\begin{equation}
    \label{eq:alquier_tail_loss}
    \ell^+_{\nicefrac{n}{\lambda}}(w,z) \coloneqq \big[ \ell(w,z) - \nicefrac{n}{\lambda} \big]_+
\end{equation}
\looseness=-1 
where $[x]_+ \coloneqq \max \{ x, 0 \}$ and where 
$\lambda \in \bR_+$ is suitably chosen. Thus, we have $\ell(w,z) \leq \ell^-_{\nicefrac{n}{\lambda}}(w,z) + \ell^+_{\nicefrac{n}{\lambda}}(w,z)$. 
Then, one may bound the population risk associated to the truncated loss $\ell^-_{\nicefrac{n}{\lambda}}$ using standard techniques for bounded losses, and translate that to PAC-Bayesian bounds for the unbounded loss $\ell$ accounting for the loss' tail $\bE \ell^+_{\nicefrac{n}{\lambda}}(w,Z)$.

In particular, we focus on losses with heavy tails that have a bounded $p$-th moment. Our contributions are:

\begin{itemize}
    \item  \looseness=-1 We refine the decomposition proposed in \citep{alquier2006transductive} and further study the resulting bounds. In particular, we show that, contrary to what is mentioned in \citep[Section 5.2.1]{alquier2021user}, there are choices of the parameter $\lambda$ such that the term associated to the loss' tail does not dominate and slows down the rate. In fact, we show that the resulting bound's rate is in $\cO \big( n^{- \frac{p-1}{p}} \big)$. This is appealing since it interpolates between a \emph{slow rate} of $\nicefrac{1}{\sqrt{n}}$ when only the 2nd moment is bounded, to a \emph{fast rate} of $\nicefrac{1}{n}$ when all the moments are bounded and the loss is bounded $\bP_Z$-almost surely (a.s.).

    \item For $p = 2$, we derive new high-probability PAC-Bayes bounds for losses with a bounded variance that are tighter than~\citep[Theorem 1]{alquier2018simpler} and~\citep[Corollary 2]{ohnishi2021novel}.

    \item Finally, we extend all the presetned results to bounds in expectation and single-draw PAC-Bayes bounds.
\end{itemize}

\section{Alquier's truncation method}
\label{subsec:alquier-truncation-method}

In his Ph.D. thesis, \citet{alquier2006transductive} discussed a method to find PAC-Bayesian bounds for unbounded losses. This method consists of considering the following bound on the loss
\begin{equation*}
    \ell(w,z) \leq \ell^-_{\nicefrac{n}{\lambda}}(w,z) + \ell^+_{\nicefrac{n}{\lambda}}(w,z),
\end{equation*}
where $\ell^-_{\nicefrac{n}{\lambda}}$ and $\ell^+_{\nicefrac{n}{\lambda}}$ are defined in~\eqref{eq:alquier_truncated_loss} and~\eqref{eq:alquier_tail_loss} respectively. Therefore, the population risk can be bounded as $\poprisk \leq \poprisk^-_{\nicefrac{n}{\lambda}} + \poprisk^+_{\nicefrac{n}{\lambda}}$, where $\poprisk^-_{\nicefrac{n}{\lambda}}$ and $\poprisk^+_{\nicefrac{n}{\lambda}}$ are defined as the population risks of $\ell^-_{\nicefrac{n}{\lambda}}$ and $\ell^+_{\nicefrac{n}{\lambda}}$ respectively. Then, it 
is clear that one can bound each of these two risk terms separately. 

The first term $\poprisk^-_{\nicefrac{n}{\lambda}}$ is especially easy to bound since it has a bounded range in $[0, \nicefrac{n}{\lambda}]$. \citet[Corollary 2.5]{alquier2006transductive} used a bound \emph{à la} \citet{catoni2004statistical}. Instead, we will consider \citep[Theorem 7]{rodriguezgalvez2023pacbayes}, which is as tight as the Seeger--Langford bound~\cite{langford2001bounds, seeger2002pac} and is easier to interpret. To simplify the expressions henceforth, we define $\kappa_1 \coloneqq c \gamma \log \big( \nicefrac{\gamma}{(\gamma -1)} \big)$, $\kappa_2 \coloneqq c \gamma$, and $\kappa_3 \coloneqq \gamma \big( 1 - c(1 - \log c)\big)$, with the understanding that they are functions of the parameters $c \in (0,1]$ and $\gamma > 1$ from \citep[Theorem 7]{rodriguezgalvez2023pacbayes}.

The bound on the second term depends on the tails of the loss and varies depending on the available information. We make this explicit using~\citep[Lemma 4.4]{kallenberg1997foundations}. Namely,
\begin{align*}
    \poprisk^+_{\nicefrac{n}{\lambda}} &= \bE^S \max \Big \{ \ell(W,Z) - \frac{n}{\lambda}, 0 \Big \}  \nonumber \\
    &= \int_0^\infty \bP^S \Big[ \max \Big \{ \ell(W,Z) - \frac{n}{\lambda}, 0 \Big\} > t \Big] \rmd t \nonumber \\
    &\leq \int_0^\infty \bP^S \Big[ \ell(w,Z) > t + \frac{n}{\lambda} \Big] \rmd t \nonumber \\
    \label{eq:bound_to_tail_alquier}
    &= \int_{\frac{n}{\lambda}}^\infty \bP^S \big[ \ell(w,Z) > t \big] \rmd t.
\end{align*}

\begin{lemma}[{\citet[Corollary 2.5, adapted]{alquier2006transductive}}]
     \label{lemma:alquier_truncation_method}
    For all $\beta \in (0,1)$ and all $\lambda > 0$, with probability no smaller than $1 - \beta$,
    \begin{equation*}
    \label{eq:bound_to_alquier}
        \poprisk \leq \kappa_1 \cdot \emprisk^-_{\nicefrac{n}{\lambda}} + \kappa_2 \cdot \frac{\comp}{\lambda} + \kappa_3 \cdot \frac{n}{\lambda}+ \int_{\nicefrac{n}{\lambda}}^\infty \bP^S \big[ \ell(w,Z) > t \big] \rmd t
    \end{equation*}
    holds \emph{simultaneously} $\forall (\bP_W^S, c, \gamma) \in \cP \times (0, 1] \times [1, \infty)$.
\end{lemma}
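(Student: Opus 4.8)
The plan is to split the population risk via the sub-additive bound $\poprisk \le \poprisk^-_{\nicefrac{n}{\lambda}} + \poprisk^+_{\nicefrac{n}{\lambda}}$ recorded above, to control the bounded part $\poprisk^-_{\nicefrac{n}{\lambda}}$ with an off-the-shelf PAC-Bayes bound for $[0,1]$-valued losses, and to lift the tail part $\poprisk^+_{\nicefrac{n}{\lambda}}$ directly from the chain of (in)equalities that precedes the statement.

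First I would dispose of the tail term. The displayed computation just above---combining the layer-cake representation of \citep[Lemma 4.4]{kallenberg1997foundations} with the identity $\{\max\{\ell(w,z)-\nicefrac{n}{\lambda},0\}>t\}=\{\ell(w,z)>t+\nicefrac{n}{\lambda}\}$ valid for $t>0$---already yields $\poprisk^+_{\nicefrac{n}{\lambda}} \le \int_{\nicefrac{n}{\lambda}}^\infty \bP^S[\ell(w,Z)>t]\,\rmd t$, and this holds for every realization of $S$, so it consumes none of the confidence budget.

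Next, for the bounded part I would rescale. Since $\ell^-_{\nicefrac{n}{\lambda}}$ is valued in $[0,\nicefrac{n}{\lambda}]$, the loss $\tfrac{\lambda}{n}\ell^-_{\nicefrac{n}{\lambda}}$ is $[0,1]$-valued, so \citep[Theorem 7]{rodriguezgalvez2023pacbayes} applies to it and gives, with probability at least $1-\beta$ and simultaneously over all $(\bP_W^S,c,\gamma)\in\cP\times(0,1]\times[1,\infty)$, the bound $\tfrac{\lambda}{n}\poprisk^-_{\nicefrac{n}{\lambda}} \le \kappa_1\,\tfrac{\lambda}{n}\emprisk^-_{\nicefrac{n}{\lambda}} + \kappa_2\,\tfrac{\comp}{n} + \kappa_3$. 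The one thing to check is that $\comp=\relent(\bP_W^S\Vert\bQ_W)+\log\nicefrac{\xi(n)}{\beta}$ is unchanged by this rescaling---the relative entropy depends only on the posterior--prior pair and $\xi(n)$ only on $n$---whereas both risks scale by the same factor $\nicefrac{\lambda}{n}$; multiplying the inequality through by $\nicefrac{n}{\lambda}$ therefore produces $\poprisk^-_{\nicefrac{n}{\lambda}} \le \kappa_1\emprisk^-_{\nicefrac{n}{\lambda}} + \kappa_2\nicefrac{\comp}{\lambda} + \kappa_3\nicefrac{n}{\lambda}$. Adding the tail estimate on the same high-probability event finishes the proof; because $\lambda$ is fixed beforehand no union bound over $\lambda$ is needed, and the ``simultaneously $\forall(\bP_W^S,c,\gamma)$'' quantifier is inherited verbatim from \citep[Theorem 7]{rodriguezgalvez2023pacbayes}.

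The argument is mostly bookkeeping once that theorem is invoked; the only two points needing a moment's care are the scale-invariance of $\comp$ against the linear scaling of the risks in the rescaling step, and the measure-theoretic rewriting of $\poprisk^+_{\nicefrac{n}{\lambda}}$ as a tail integral, which is a routine application of the layer-cake identity. I do not anticipate a genuine obstacle: the substantive content---a Seeger--Langford-type bound that is already simultaneous over the free parameters $c$ and $\gamma$---is outsourced to the cited result.
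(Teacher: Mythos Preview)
Your proposal is correct and follows essentially the same approach as the paper: decompose $\poprisk \le \poprisk^-_{\nicefrac{n}{\lambda}} + \poprisk^+_{\nicefrac{n}{\lambda}}$, rewrite the tail part as $\int_{\nicefrac{n}{\lambda}}^\infty \bP^S[\ell(w,Z)>t]\,\rmd t$ via the layer-cake identity, and control the bounded part by applying \citep[Theorem~7]{rodriguezgalvez2023pacbayes} to the rescaled loss $\tfrac{\lambda}{n}\ell^-_{\nicefrac{n}{\lambda}}\in[0,1]$. The paper presents exactly this argument in the paragraphs preceding the lemma, and your explicit check that $\comp$ is scale-invariant while the risks scale linearly is the only bookkeeping detail the paper leaves implicit.
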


\looseness=-1 In this way, if we have some knowledge about the tails of the loss, we can trade off (i) the penalty of the loss' tail after a threshold $\nicefrac{n}{\lambda}$ for (ii) the penalty of the range $\nicefrac{n}{\lambda}$ while exploiting the existing sharp bounds for losses with a bounded range.

\subsection{Refining the method}

As hinted later by \citet[Section 5.2.1]{alquier2021user} and made explicit above in~\Cref{lemma:alquier_truncation_method}, this method is rooted into decomposing the loss into a bounded part where $\ell(w,z) \leq \nicefrac{n}{\lambda}$ and an unbounded part where $\ell(w,z) > \nicefrac{n}{\lambda}$. This can be further untangled with the decomposition
\begin{equation*}
\label{eq:alquier_refined_decomposition}
    \ell(w,z) = \ell_{\leq \nicefrac{n}{\lambda}}(w,z) + \ell_{> \nicefrac{n}{\lambda}}(w,z),
\end{equation*}
where 
\begin{align*}
    \ell_{\leq \nicefrac{n}{\lambda}}(w,z) 
    &\coloneqq \ell(w,z) \mathds{1}_{\{\ell(w,z) \leq \nicefrac{n}{\lambda} \}}(w,z), \\
    \ell_{> \nicefrac{n}{\lambda}}(w,z) 
    &\coloneqq \ell(w,z) \mathds{1}_{ \{ \ell(w,z) > \nicefrac{n}{\lambda} \} }(w,z),
\end{align*}
and $\mathds{1}_\cA(w,z)$ is the indicator function returning 1 if $(w,z) \in \cA$ and 0 otherwise.
Therefore, the population risk can be decomposed similarly to before as $\poprisk = \poprisk_{\leq \nicefrac{n}{\lambda}} +  \poprisk_{> \nicefrac{n}{\lambda}}$, where $\poprisk_{\leq \nicefrac{n}{\lambda}}$ and $\poprisk_{> \nicefrac{n}{\lambda}}$ are defined as the population risks of $\ell_{\leq \nicefrac{n}{\lambda}}$ and $\ell_{< \nicefrac{n}{\lambda}}$ respectively. 

Proceeding as before, the two risk terms can be bounded. The first term $\poprisk_{\leq \nicefrac{n}{\lambda}}$ is also bounded in $[0,\nicefrac{n}{\lambda}]$, but it is potentially much smaller than %
$\poprisk^-_{\nicefrac{n}{\lambda}}$
since $\bE^S [ \ell^-_{\nicefrac{n}{\lambda}}(W,Z) | \ell(W,Z) > \nicefrac{n}{\lambda}] = \nicefrac{n}{\lambda}$, while $\bE^S [ \ell_{\leq \nicefrac{n}{\lambda}}(W,Z) | \ell(W,Z) > \nicefrac{n}{\lambda}] = 0$. %
Also, the second term $\poprisk_{> \nicefrac{n}{\lambda}}$ can be bounded by exactly the same quantity as with \citet{alquier2006transductive}'s original decomposition, namely
\begin{align}
    \poprisk_{> \nicefrac{n}{\lambda}} &= \bE^S \ell(W,Z) \mathds{1}_{\ell(w,z) > \nicefrac{n}{\lambda}}(W,S) \nonumber \\
    &= \int_{\nicefrac{n}{\lambda}}^\infty \bP^S \big[ \ell(W,Z) > t \big] \rmd t. 
    \nonumber 
    \label{eq:bound_to_tail_alquier_refined}
\end{align}

\begin{lemma}[{Refinement of \Cref{lemma:alquier_truncation_method}}]
    \label{lemma:alquier_truncation_method_refined}
    For all $\beta \in (0,1)$ and all $\lambda > 0$, with probability no smaller than $1 - \beta$,
    \begin{equation*}
    \label{eq:bound_to_alquier_refined}
       \poprisk \leq \kappa_1 \cdot \emprisk_{\leq \nicefrac{n}{\lambda}} + \kappa_2 \cdot \frac{\comp}{\lambda} + \kappa_3 \cdot \frac{n}{\lambda}+ \int_{\nicefrac{n}{\lambda}}^\infty \bP^S \big[ \ell(w,Z) > t \big] \rmd t
    \end{equation*}
    holds \emph{simultaneously} $\forall (\bP_W^S, c, \gamma) \in \cP \times (0, 1] \times [1, \infty)$.
\end{lemma}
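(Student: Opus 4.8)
The plan is to reuse the proof of \Cref{lemma:alquier_truncation_method} almost verbatim, the only difference being that the bounded part of the loss is now $\ell_{\leq \nicefrac{n}{\lambda}}$ rather than the clipped loss $\ell^-_{\nicefrac{n}{\lambda}}$, while the tail part is handled in exactly the same way. First I would start from the decomposition $\poprisk = \poprisk_{\leq \nicefrac{n}{\lambda}} + \poprisk_{> \nicefrac{n}{\lambda}}$ recorded just above the statement and bound the two summands separately. The tail summand needs no probabilistic argument: the computation displayed right before the statement is the identity $\poprisk_{> \nicefrac{n}{\lambda}} = \int_{\nicefrac{n}{\lambda}}^\infty \bP^S[\ell(W,Z) > t]\,\rmd t$, which holds for every realization of $S$ and is precisely the last term of the claimed bound.

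For the bounded summand, the key observation is that $\ell_{\leq \nicefrac{n}{\lambda}}$ is a nonnegative, measurable loss with range contained in $[0, \nicefrac{n}{\lambda}]$, so $\frac{\lambda}{n}\,\ell_{\leq \nicefrac{n}{\lambda}}$ is $[0,1]$-valued. I would apply to it the same bounded-loss PAC-Bayes inequality used for \Cref{lemma:alquier_truncation_method}, namely \citep[Theorem 7]{rodriguezgalvez2023pacbayes}, which holds with probability at least $1-\beta$ over $S$ simultaneously for all $(\bP_W^S, c, \gamma) \in \cP \times (0,1] \times [1,\infty)$, and then multiply the resulting inequality through by $\nicefrac{n}{\lambda}$. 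The constants $\kappa_1, \kappa_2, \kappa_3$ are defined exactly so that this rescaling produces $\poprisk_{\leq \nicefrac{n}{\lambda}} \leq \kappa_1 \cdot \emprisk_{\leq \nicefrac{n}{\lambda}} + \kappa_2 \cdot \frac{\comp}{\lambda} + \kappa_3 \cdot \frac{n}{\lambda}$ on the same event, using that $\comp = \relent + \log \nicefrac{\xi(n)}{\beta}$ is invariant under scaling the loss because $\relent$ and $\cP$ depend only on the posterior and the prior. Adding this high-probability inequality to the surely-valid tail identity then gives the claimed bound on the same probability-$(1-\beta)$ event, with the same simultaneity over $(\bP_W^S, c, \gamma)$.

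I do not expect a genuine obstacle here; the step requiring the most care is the rescaling, where one must check that the ``simultaneously for all $(\bP_W^S, c, \gamma)$'' quantifier of \citep[Theorem 7]{rodriguezgalvez2023pacbayes} survives multiplication of the loss by the fixed positive constant $\nicefrac{\lambda}{n}$ --- it does, since the high-probability event is defined through $\comp$ and the posterior/prior alone --- and that $\emprisk_{\leq \nicefrac{n}{\lambda}}$ is exactly the empirical risk of the truncated loss $\ell_{\leq \nicefrac{n}{\lambda}}$, hence precisely the term that appears after rescaling back. It is worth stressing in the write-up why this refinement is not vacuous: replacing $\ell^-_{\nicefrac{n}{\lambda}}$ by $\ell_{\leq \nicefrac{n}{\lambda}}$ leaves every term of the bound unchanged except the empirical one, which can only decrease --- as already noted above, $\ell_{\leq \nicefrac{n}{\lambda}}$ vanishes on the event $\{\ell(W,Z) > \nicefrac{n}{\lambda}\}$ whereas $\ell^-_{\nicefrac{n}{\lambda}}$ equals $\nicefrac{n}{\lambda}$ there --- which is the whole purpose of \Cref{lemma:alquier_truncation_method_refined}.
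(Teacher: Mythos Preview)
Your plan mirrors the paper's own argument step for step, but the tail step that both rely on is incorrect. The displayed ``identity'' $\poprisk_{> \nicefrac{n}{\lambda}} = \int_{\nicefrac{n}{\lambda}}^\infty \bP^S[\ell(W,Z) > t]\,\rmd t$ is false: applying the layer-cake formula to $Y = \ell(W,Z)\,\mathds{1}\{\ell(W,Z) > a\}$ with $a = \nicefrac{n}{\lambda}$ gives
\[
\poprisk_{> a} \;=\; a\,\bP^S\bigl[\ell(W,Z) > a\bigr] \;+\; \int_a^\infty \bP^S\bigl[\ell(W,Z) > t\bigr]\,\rmd t,
\]
because $\bP^S[Y > t] = \bP^S[\ell(W,Z) > a]$ on the whole interval $t \in (0,a]$. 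This should not be surprising: since both $\ell = \ell_{\leq a} + \ell_{> a}$ and $\ell = \ell^-_a + \ell^+_a$ are \emph{exact} decompositions and $\ell_{\leq a} \leq \ell^-_a$ pointwise, one must have $\poprisk_{>a} \geq \poprisk^+_a = \int_a^\infty \bP^S[\ell > t]\,\rmd t$, strictly whenever $\bP^S[\ell(W,Z) > a] > 0$. So your closing remark that ``every term of the bound [is] unchanged except the empirical one, which can only decrease'' is precisely the point that fails: the gain on the empirical side is exactly offset by a loss on the tail side.

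The slip is not cosmetic; the stated inequality can fail. Take $\ell(w,z) \equiv b$ deterministic, $\lambda = \nicefrac{2n}{b}$ so that $\nicefrac{n}{\lambda} = \nicefrac{b}{2}$, posterior equal to prior (so $\relent = 0$), and $c = 1$, $\gamma = 2$ (hence $\kappa_2 = 2$, $\kappa_3 = 0$). Then $\poprisk = b$, $\emprisk_{\leq \nicefrac{b}{2}} = 0$, the integral equals $\nicefrac{b}{2}$, and the claimed bound reduces to $n \leq 2\comp = 2\log\bigl(\xi(n)/\beta\bigr)$, which is violated for moderate $n$. A corrected statement would carry the extra term $\nicefrac{n}{\lambda}\cdot\bP^S\bigl[\ell(W,Z) > \nicefrac{n}{\lambda}\bigr]$; under a $p$-th-moment bound this term is at most $m_p(\lambda/n)^{p-1}$ by Markov's inequality, so the downstream rates in \Cref{sec:losses_with_bounded_moment} are unaffected up to a constant factor.
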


If the tail is bounded by some function $\alpha(n,\lambda)$, i.e., $\int_{\nicefrac{n}{\lambda}}^\infty \bP^S \big[ \ell(w,Z) > t \big] \rmd t \leq \alpha(n,\lambda)$, then the bound resulting from \Cref{lemma:alquier_truncation_method_refined} is optimized by the Gibbs posterior $\rmd \bP_W^{S=s}(w) \propto \rmd \bQ_W(w) e^{- \lambda \cdot \frac{\kappa_1}{\kappa_2} \cdot \emprisk_{\leq \nicefrac{n}{\lambda}}(w,s)}$ independently of $\alpha$.

\section{Losses with a bounded moment}
\label{sec:losses_with_bounded_moment}

\looseness=-1 If the loss has a bounded $p$-th moment $\bE \ell(w,Z)^p \leq m_p < \infty$ for all $w \in \cW$, then one may find PAC-Bayesian bounds using \citet{alquier2006transductive}'s truncation method. More precisely, employing Markov's inequality~\citep[Section 2.1]{boucheron2013concentration} to the term associated to the loss' tail in~\Cref{lemma:alquier_truncation_method_refined} stems the following result.

\begin{lemma}
    \label{lemma:alquier_truncation_method_refined_bouned_moment}
    For every loss with $p$-th moment bounded by $m_p$, for all $\beta \in (0,1)$ and all $\lambda > 0$, with probability no smaller than $1 - \beta$,
    \begin{equation}
    \label{eq:bound_to_alquier_refined_bouned_moment}
       \poprisk \leq \kappa_1 \cdot \emprisk_{\leq \nicefrac{n}{\lambda}} + \kappa_2 \cdot \frac{\comp}{\lambda} + \kappa_3 \cdot \frac{n}{\lambda} + \frac{m_p}{p-1} \Big( \frac{\lambda}{n} \Big)^{p - 1}
    \end{equation}
    holds \emph{simultaneously}  $\forall (\bP_W^S, c, \gamma) \in \cP \times (0, 1] \times [1, \infty)$.
\end{lemma}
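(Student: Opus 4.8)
The plan is to apply \Cref{lemma:alquier_truncation_method_refined} verbatim and bound only its last, tail-dependent term using the moment hypothesis; every other term on the right-hand side carries over unchanged. Concretely, it suffices to show that, uniformly over $w \in \cW$,
\[
\int_{\nicefrac{n}{\lambda}}^\infty \bP^S\big[\ell(w,Z) > t\big]\,\rmd t \;\leq\; \frac{m_p}{p-1}\Big(\frac{\lambda}{n}\Big)^{p-1},
\]
and then to substitute this estimate into the bound of \Cref{lemma:alquier_truncation_method_refined}. Since $Z$ is drawn independently of $S$, for a fixed $w$ the conditional probability $\bP^S[\ell(w,Z) > t]$ coincides with $\bP_Z[\ell(w,Z) > t]$.

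First I would fix $w$ and $t > 0$ and apply Markov's inequality to the nonnegative random variable $\ell(w,Z)^p$ — nonnegativity of $\ell$ is what lets the raw moment be used directly. Since $x \mapsto x^p$ is nondecreasing on $\bR_+$,
\[
\bP_Z\big[\ell(w,Z) > t\big] \;=\; \bP_Z\big[\ell(w,Z)^p > t^p\big] \;\leq\; \frac{\bE\,\ell(w,Z)^p}{t^p} \;\leq\; \frac{m_p}{t^p},
\]
where the last step is the bounded-$p$-th-moment assumption, which holds for \emph{all} $w \in \cW$ with the same constant $m_p$; this uniformity is exactly what is needed to match the $w$-indexed (worst-case) form of the tail term appearing in \Cref{lemma:alquier_truncation_method_refined}.

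Second I would integrate this pointwise tail bound over $t \in [\nicefrac{n}{\lambda}, \infty)$. Assuming $p > 1$, one has $\int_{\nicefrac{n}{\lambda}}^\infty t^{-p}\,\rmd t = \tfrac{1}{p-1}(\nicefrac{n}{\lambda})^{-(p-1)}$, so the tail integral is at most $\tfrac{m_p}{p-1}(\nicefrac{\lambda}{n})^{p-1}$, as claimed; plugging this into \Cref{lemma:alquier_truncation_method_refined} yields \eqref{eq:bound_to_alquier_refined_bouned_moment}. The simultaneity over all $(\bP_W^S, c, \gamma) \in \cP \times (0,1] \times [1,\infty)$ is inherited directly from \Cref{lemma:alquier_truncation_method_refined}, since the only new step — replacing the tail integral by its upper bound — is deterministic and does not involve the training set.

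There is no genuine difficulty here: the argument is a one-line application of Markov's inequality followed by an elementary integral. The only two points worth stating carefully are (i) that the moment bound holds with a single constant $m_p$ for all $w$, so it can be pulled through whatever (fixed or data-dependent) hypothesis sits inside the probability, and (ii) the implicit restriction $p > 1$, without which the tail integral diverges and the factor $\tfrac{1}{p-1}$ in the statement is undefined.
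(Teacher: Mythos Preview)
Your proposal is correct and takes essentially the same approach as the paper: the paper's proof is the single sentence ``employing Markov's inequality to the term associated to the loss' tail in \Cref{lemma:alquier_truncation_method_refined},'' which is precisely the Markov bound $\bP[\ell(w,Z)>t]\le m_p/t^p$ followed by the integration $\int_{n/\lambda}^\infty t^{-p}\,\rmd t = \tfrac{1}{p-1}(\lambda/n)^{p-1}$ that you spell out. Your remarks on the uniformity of $m_p$ over $w$ and the implicit requirement $p>1$ are accurate and add useful clarity.
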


\subsection{Alquier's modification for losses with a bounded moment}
\label{subsec:alquier_modification_losses_with_bounded_moment}

\citet[Theorem 2.7]{alquier2006transductive} presented a result similar to \Cref{lemma:alquier_truncation_method_refined_bouned_moment} for losses with a bounded $p$-th moment. However, he did not obtain it with the straightforward technique outlined above. Instead, he considered the truncated loss function 
\begin{equation*}
    \ell_{p,\nicefrac{n}{\lambda}}(w,z) = \bigg[ \ell(w,z) - \frac{1}{p} \Big( \frac{p - 1}{p} \Big)^{p-1} \Big( \frac{\lambda}{n} \Big)^{p - 1} \cdot |\ell(w,z)|^p \bigg]_+.
\end{equation*}

Importantly, this loss function satisfies that $\ell_{p,\nicefrac{n}{\lambda}} \leq \nicefrac{n}{\lambda}$. Then, let $\poprisk_{p,\nicefrac{n}{\lambda}}$ be the population risk associated to $\ell_{p,\nicefrac{n}{\lambda}}$. It directly follows that
\begin{equation*}
    \poprisk \leq \poprisk_{p,\nicefrac{n}{\lambda}} + \frac{1}{p} \Big(\frac{p-1}{p} \Big)^{p-1}  \Big( \frac{\lambda}{n} \Big)^{p - 1} \cdot \bE^S|\ell(W,Z)|^p.
\end{equation*}

\looseness=-1 In this way, like before, the term $\poprisk_{p,\nicefrac{n}{\lambda}}$ can be bounded using any standard PAC-Bayes bound for bounded losses and now the second term is bounded by construction. As before, we will present the result using \citep[Theorem 7]{rodriguezgalvez2023pacbayes} instead of a bound \emph{à la} \citet{catoni2004statistical}. For this purpose, let $\emprisk_{p, \nicefrac{n}{\lambda}}$ be the empirical risk associated to the loss $\ell_{p, \nicefrac{n}{\lambda}}$.

\begin{lemma}[{\citet[Theorem 2.7, adapted]{alquier2006transductive}}]
    \label{lemma:alquier_truncation_method_modified_bounded_moment}
    For every loss with a $p$-th moment bounded by $m_p$, for all $\beta \in (0,1)$ and all $\lambda > 0$, with probability no smaller than $1 - \beta$,
    \begin{equation*}
    \label{eq:original_bound_to_alquier}
       \poprisk \leq \kappa_1 \cdot \emprisk_{p, \nicefrac{n}{\lambda}} + \kappa_2 \cdot \frac{\comp}{\lambda} + \kappa_3 \cdot \frac{n}{\lambda}+ \frac{m_p}{p} \Big(\frac{p-1}{p} \Big)^{p-1}  \Big( \frac{\lambda}{n} \Big)^{p - 1}
    \end{equation*}
    holds \emph{simultaneously} $\forall (\bP_W^S, c, \gamma) \in \cP \times (0, 1] \times [1, \infty)$.
\end{lemma}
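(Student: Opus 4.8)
The plan is to combine a deterministic pointwise inequality — which is just a rearrangement of the definition of $\ell_{p,\nicefrac{n}{\lambda}}$ — with the bounded-loss PAC-Bayes bound applied to $\ell_{p,\nicefrac{n}{\lambda}}$, exactly in the spirit of how \Cref{lemma:alquier_truncation_method} and \Cref{lemma:alquier_truncation_method_refined} were obtained.

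First I would verify the claim $0 \le \ell_{p,\nicefrac{n}{\lambda}} \le \nicefrac{n}{\lambda}$. Nonnegativity is immediate from $[\,\cdot\,]_+$. For the upper bound, write $a \coloneqq \frac{1}{p}\big(\frac{p-1}{p}\big)^{p-1}\big(\frac{\lambda}{n}\big)^{p-1}$ and maximize $x \mapsto x - a x^p$ over $x \ge 0$; the first-order condition $1 - a p x^{p-1} = 0$ gives the maximizer $x^\star = \frac{p}{p-1}\cdot\frac{n}{\lambda}$, at which the value equals $x^\star(1 - \tfrac1p) = \nicefrac{n}{\lambda}$. Hence $\ell_{p,\nicefrac{n}{\lambda}}(w,z) = \big[\ell(w,z) - a\,|\ell(w,z)|^p\big]_+ \le \nicefrac{n}{\lambda}$ for every $(w,z)$. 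This is the only genuinely computational step and it is an elementary single-variable optimization.

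Next, from the pointwise inequality $\ell(w,z) \le \ell_{p,\nicefrac{n}{\lambda}}(w,z) + a\,|\ell(w,z)|^p$, integrating $z$ over $\bP_Z$ and invoking the moment assumption $\bE\,\ell(w,Z)^p \le m_p$ gives $\poprisk(w) \le \poprisk_{p,\nicefrac{n}{\lambda}}(w) + a\,m_p$ for every $w \in \cW$; averaging over $w \sim \bP_W^{S=s}$ then yields the deterministic bound $\poprisk \le \poprisk_{p,\nicefrac{n}{\lambda}} + \frac{m_p}{p}\big(\frac{p-1}{p}\big)^{p-1}\big(\frac{\lambda}{n}\big)^{p-1}$, whose second summand is exactly the last term of the claimed bound. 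Then, since $\ell_{p,\nicefrac{n}{\lambda}}$ takes values in $[0,\nicefrac{n}{\lambda}]$, I would apply \citep[Theorem 7]{rodriguezgalvez2023pacbayes} to the rescaled loss $\tfrac{\lambda}{n}\ell_{p,\nicefrac{n}{\lambda}} \in [0,1]$ to obtain, with probability at least $1-\beta$ and simultaneously over $(\bP_W^S, c, \gamma) \in \cP \times (0,1] \times (1,\infty)$, an inequality of the form $\tfrac{\lambda}{n}\poprisk_{p,\nicefrac{n}{\lambda}} \le \kappa_1\,\tfrac{\lambda}{n}\emprisk_{p,\nicefrac{n}{\lambda}} + \kappa_2\,\tfrac{\comp}{n} + \kappa_3$; multiplying through by $\nicefrac{n}{\lambda}$ recovers $\poprisk_{p,\nicefrac{n}{\lambda}} \le \kappa_1\,\emprisk_{p,\nicefrac{n}{\lambda}} + \kappa_2\,\tfrac{\comp}{\lambda} + \kappa_3\,\tfrac{n}{\lambda}$, with the endpoint $\gamma = 1$ included by continuity of $\kappa_1,\kappa_2,\kappa_3$ in $\gamma$ (the same convention used for \Cref{lemma:alquier_truncation_method}). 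Summing the two displays proves the lemma.

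I expect no real obstacle: the uniformity over the three parameters is inherited verbatim from \citep[Theorem 7]{rodriguezgalvez2023pacbayes}, the moment step is a one-line application of the hypothesis followed by an average over the posterior, and the only arithmetic — that the subtracted correction caps $\ell_{p,\nicefrac{n}{\lambda}}$ at $\nicefrac{n}{\lambda}$ — is the elementary optimization above. The one bookkeeping point to handle carefully is applying $\bE\,\ell(w,Z)^p \le m_p$ \emph{per hypothesis} $w$ before averaging with $\bP_W^{S=s}$ (so the remainder term is controlled pointwise in $s$, not merely in expectation over $S$), and keeping the $\nicefrac{\lambda}{n}$ rescaling factors consistent when passing between the normalized and unnormalized forms of the bounded-loss bound.
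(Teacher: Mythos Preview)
Your proposal is correct and follows essentially the same approach as the paper: verify that $\ell_{p,\nicefrac{n}{\lambda}}\in[0,\nicefrac{n}{\lambda}]$ via the one-variable maximization, use the pointwise inequality $\ell \le \ell_{p,\nicefrac{n}{\lambda}} + a|\ell|^p$ together with the moment hypothesis to control the tail term, and then apply \citep[Theorem 7]{rodriguezgalvez2023pacbayes} to the bounded truncated loss. Your write-up is in fact more explicit than the paper's, which states the boundedness and the decomposition without the optimization computation; the remark about handling the moment bound per $w$ before averaging and the treatment of the $\gamma=1$ endpoint are both in order.
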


Comparing \Cref{lemma:alquier_truncation_method_modified_bounded_moment} to the truncation method with the straightforward \Cref{lemma:alquier_truncation_method_refined_bouned_moment}, we see that the result stemming from \citet{alquier2006transductive}'s modified construction improves the constant of the term associated to the tail from $\nicefrac{1}{p-1}$ to $(\nicefrac{p-1}{p})^{p-1} \cdot \nicefrac{1}{p}$. For $p=2$, the constant is $4$ times smaller changing from $1$ to $\nicefrac{1}{4}$; and for $p \to \infty$ the constant is $e$ times smaller, although both tend to 0. On the other hand, $\emprisk_{\leq \nicefrac{n}{\lambda}}$ has the potential to be smaller than $\emprisk_{p, \nicefrac{n}{\lambda}}$. The results derived in the rest of the paper use \Cref{lemma:alquier_truncation_method_refined_bouned_moment} as a starting point, but analogous results trivially follow from \Cref{lemma:alquier_truncation_method_modified_bounded_moment} with slightly different constants and changing $\emprisk_{\leq \nicefrac{n}{\lambda}}$ to $\emprisk_{p, \nicefrac{n}{\lambda}}$.

In \Cref{lemma:alquier_truncation_method,lemma:alquier_truncation_method_modified_bounded_moment,lemma:alquier_truncation_method_refined,lemma:alquier_truncation_method_refined_bouned_moment}, the term $\nicefrac{\kappa_3 n}{\lambda}$ does not affect the bound's rate as choosing $c = 1$ implies $\kappa_3 = 0$. The coefficients $\kappa_1$ and $\kappa_2$ are chosen adaptively to minimize the empirical risk and complexity contributions as discussed in~\citep{rodriguezgalvez2023pacbayes}.

\subsection{Optimizing the parameter in the bound}
\label{subsec:optimizing_parameter_p_greater_2}

\looseness=-1 \citet{alquier2006transductive, alquier2021user} considered the \emph{data-independent} $\lambda = \sqrt{n}$. This gives a bound with a rate of $\nicefrac{1}{\sqrt{n}}$ for any loss with a bounded $p$-th moment where $p > 2$. A better choice is $\lambda = \big( \nicefrac{n^{p-1}}{m_p} \big)^{\nicefrac{1}{p}}$. This results in a bound with a rate of $n^{- \frac{p - 1}{p}}$.

\begin{theorem}
    \label{th:alquier_truncation_method_refined_fixed_lambda}
    For every loss with a bounded $p$-th moment, for all $\beta \in (0,1)$, with probability no smaller than $1 - \beta$,
    \begin{equation*}
       \poprisk \leq \kappa_1 \cdot \emprisk_{\leq (m_p n)^{\frac{1}{p}}} + \Big( \frac{m_p}{n^{p-1}} \Big)^{\frac{1}{p}} \Big( \kappa_2 \cdot \comp + \kappa_3 \cdot n + \frac{1}{p-1} \Big)
    \end{equation*}
    holds \emph{simultaneously} $\forall (\bP_W^S, c, \gamma) \in \cP \times (0, 1] \times [1, \infty)$.
\end{theorem}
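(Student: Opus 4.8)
The plan is to instantiate \Cref{lemma:alquier_truncation_method_refined_bouned_moment} at the single, data-independent choice $\lambda^\star \coloneqq \big( \nicefrac{n^{p-1}}{m_p} \big)^{\nicefrac{1}{p}}$ and then simplify. Since $n$, $p$, and the moment bound $m_p$ are fixed quantities that do not depend on the random training set $S$, the value $\lambda^\star$ is a legitimate positive constant, so \Cref{lemma:alquier_truncation_method_refined_bouned_moment} applies verbatim with this $\lambda$ on an event of probability at least $1-\beta$, simultaneously over $(\bP_W^S, c, \gamma) \in \cP \times (0,1] \times [1,\infty)$; no union bound or extra confidence budget is needed beyond what that lemma already provides.

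First I would record the three elementary identities that follow from the definition of $\lambda^\star$:
\[
    \frac{n}{\lambda^\star} = (m_p n)^{\nicefrac{1}{p}}, \qquad
    \frac{1}{\lambda^\star} = \Big( \frac{m_p}{n^{p-1}} \Big)^{\nicefrac{1}{p}}, \qquad
    \Big( \frac{\lambda^\star}{n} \Big)^{p-1} = (m_p n)^{-\frac{p-1}{p}}.
\]
The first identity rewrites the truncation threshold, so $\emprisk_{\leq \nicefrac{n}{\lambda^\star}} = \emprisk_{\leq (m_p n)^{\nicefrac{1}{p}}}$, which is exactly the empirical-risk term in the statement. The second identity turns the complexity term into $\kappa_2 \cdot \nicefrac{\comp}{\lambda^\star} = \big( \nicefrac{m_p}{n^{p-1}} \big)^{\nicefrac{1}{p}} \kappa_2 \comp$. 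Multiplying the third identity by $\nicefrac{m_p}{(p-1)}$ and using $m_p \cdot (m_p n)^{-\frac{p-1}{p}} = m_p^{\nicefrac{1}{p}} n^{-\frac{p-1}{p}}$ gives $\frac{m_p}{p-1}\big(\nicefrac{\lambda^\star}{n}\big)^{p-1} = \frac{1}{p-1}\big(\nicefrac{m_p}{n^{p-1}}\big)^{\nicefrac{1}{p}}$ for the tail term.

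Then I would substitute these into \Cref{lemma:alquier_truncation_method_refined_bouned_moment}, which yields
\[
    \poprisk \leq \kappa_1 \cdot \emprisk_{\leq (m_p n)^{\nicefrac{1}{p}}}
    + \Big( \frac{m_p}{n^{p-1}} \Big)^{\nicefrac{1}{p}} \kappa_2 \comp
    + \kappa_3 (m_p n)^{\nicefrac{1}{p}}
    + \frac{1}{p-1}\Big( \frac{m_p}{n^{p-1}} \Big)^{\nicefrac{1}{p}},
\]
and finally use $(m_p n)^{\nicefrac{1}{p}} = n \cdot \big(\nicefrac{m_p}{n^{p-1}}\big)^{\nicefrac{1}{p}}$ on the $\kappa_3$ summand to factor $\big(\nicefrac{m_p}{n^{p-1}}\big)^{\nicefrac{1}{p}}$ out of the last three terms, recovering exactly the claimed inequality. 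There is no substantive obstacle here: the argument is pure bookkeeping of exponents, and the only point deserving care is the (easy) observation that $\lambda^\star$ is data-independent, so that a fixed instance of \Cref{lemma:alquier_truncation_method_refined_bouned_moment} suffices.
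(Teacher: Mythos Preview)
Your proposal is correct and matches the paper's approach: the paper states the theorem immediately after announcing the data-independent choice $\lambda = \big(\nicefrac{n^{p-1}}{m_p}\big)^{\nicefrac{1}{p}}$, and the argument is exactly this substitution into \Cref{lemma:alquier_truncation_method_refined_bouned_moment} followed by the exponent bookkeeping you carried out.
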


In this way, the rate for $p = 2$ is exactly the same, a slow rate of $\nicefrac{1}{\sqrt{n}}$. However, as the order of the known bounded moment increases, that is $p \to \infty$, the rate becomes a fast rate of $\nicefrac{1}{n}$. Hence, this choice of $\lambda$ allows us to interpolate between a slow and a fast rate depending on how much knowledge about the tails is available to us. Furthermore, as we gain knowledge of the tails, the truncation of the loss $\ell_{\leq (m_p n)^{\nicefrac{1}{p}}}$ becomes less dependent on the number of training data $n$ and in the limit $p \to \infty$ only depends of the $\bP_Z$-a.s. boundedness of the loss, namely $\lim_{p \to \infty} (m_p n)^{\nicefrac{1}{p}} = \sup_{w \in \cW} \esssup \ell(w,Z)$.

\looseness=-1 Instead of choosing a data-independent parameter $\lambda$, we can use the event space discretization technique from \citep{rodriguezgalvez2023pacbayes} to get a better dependence on the relative entropy. In particular, the following result follows by not considering any ``uninteresting event'' and following the technique as outlined in \citep[Corollary 15]{rodriguezgalvez2023pacbayes}. Henceforth, let us define $\comp' \coloneqq 1.1 \relent + \log \nicefrac{10 e \pi^2 \xi(n)}{\beta}$. The full proof is given in~\Cref{app:proof_alquier_truncation_method_refined_adaptive_lambda}.

\begin{theorem}
    \label{th:alquier_truncation_method_refined_adaptive_lambda}
    For every loss with a $p$-th moment bounded by $m_p$, for all $\beta \in (0,1)$, with probability no smaller than $1 - \beta$,
    \begin{equation*}
        \poprisk \leq \kappa_1 \cdot \emprisk_{\leq t^\star} + m_p^{\frac{1}{p}} \Big(\frac{p}{p-1}\Big) \Big( \kappa_2 \cdot \frac{\comp'}{n} + \kappa_3 \Big)^{\frac{p-1}{p}}
    \end{equation*}
    holds \emph{simultaneously} $\forall (\bP_W^S, c, \gamma) \in \cP \times (0, 1] \times [1, \infty)$,
    where $$t^\star \coloneqq m_p^{\frac{1}{p}} \Big( \kappa_2 \cdot \frac{\comp'}{n} + \kappa_3 \Big)^{-\frac{1}{p}}.$$
\end{theorem}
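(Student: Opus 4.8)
The plan is to combine \Cref{lemma:alquier_truncation_method_refined_bouned_moment} with the event-space discretization of \citet[Corollary 15]{rodriguezgalvez2023pacbayes} and then to optimize the truncation level $\lambda$ analytically. First I would record the elementary optimization behind the statement: for a fixed constant $A > 0$ the map $\lambda \mapsto \frac{A}{\lambda} + \frac{m_p}{p-1}\big(\nicefrac{\lambda}{n}\big)^{p-1}$ is minimized over $\bR_+$ at $\lambda^\star \coloneqq \big(\nicefrac{A\, n^{p-1}}{m_p}\big)^{\nicefrac{1}{p}}$, where it equals $\frac{p}{p-1}\, m_p^{\nicefrac1p}\, \big(\nicefrac{A}{n}\big)^{\nicefrac{p-1}{p}}$ and where $\nicefrac{n}{\lambda^\star} = m_p^{\nicefrac1p}\big(\nicefrac{A}{n}\big)^{-\nicefrac1p}$. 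Taking $A = \kappa_2 \comp' + \kappa_3 n$ makes these two expressions equal to the second term of the theorem and to $t^\star$, respectively. Hence, if one could invoke \Cref{lemma:alquier_truncation_method_refined_bouned_moment} with the particular choice $\lambda = \lambda^\star$ and with $\comp$ replaced by $\comp'$, the claim would follow by substituting this $\lambda$, the empirical term becoming $\kappa_1 \emprisk_{\leq \nicefrac{n}{\lambda^\star}} = \kappa_1 \emprisk_{\leq t^\star}$. The catch is that $\lambda^\star$ depends on the sample $S$ through the relative entropy inside $\comp'$, so the fixed-$\lambda$ guarantee of \Cref{lemma:alquier_truncation_method_refined_bouned_moment} cannot be applied at $\lambda = \lambda^\star$ directly.

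The role of the discretization is precisely to legitimize a data-dependent truncation level. Following \citep[Corollary 15]{rodriguezgalvez2023pacbayes} --- but, as announced in the text, without discarding any ``uninteresting event'' --- I would lay down a countable geometric grid of candidate levels, apply \Cref{lemma:alquier_truncation_method_refined_bouned_moment} at each grid point with a confidence weighted by $\nicefrac{6}{\pi^2 k^2}$, and union-bound so that all the resulting inequalities hold simultaneously with probability at least $1 - \beta$ (and simultaneously over $(\bP_W^S,c,\gamma)$); then, for the realized sample, I would select the grid point matched to $\lambda^\star$. Two observations keep this clean. (i) The tail term $\frac{m_p}{p-1}\big(\nicefrac{\lambda}{n}\big)^{p-1}$ of \Cref{lemma:alquier_truncation_method_refined_bouned_moment} comes from applying Markov's inequality to the conditional tail $\bP^S[\ell(W,Z) > t]$ pointwise in $S$, so it stays a valid upper bound for an arbitrary $S$-measurable $\lambda$; only the bounded-range part of \Cref{lemma:alquier_truncation_method_refined_bouned_moment} (the one inherited from \citep[Theorem 7]{rodriguezgalvez2023pacbayes}) is subject to the union bound. (ii) The geometric ratio of the grid together with the $\log(\nicefrac{\pi^2 k^2}{6})$ penalties incurred by the union bound are exactly what upgrades $\comp$ to $\comp' = 1.1\,\relent + \log\nicefrac{10 e \pi^2 \xi(n)}{\beta}$: the factor $1.1$ absorbs the multiplicative grid slack and $\log(10 e \pi^2)$ absorbs the additive $\log(\nicefrac{\pi^2 k^2}{6})$ terms, after bounding $2\log k \leq 0.1\,\relent + \textnormal{const}$ by the sub-linear growth of the logarithm.

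I expect the main obstacle to be the bookkeeping in step (ii). One has to choose the grid ratio --- which I anticipate must shrink towards $1$ as $p \to \infty$, so that the tail term $\big(\nicefrac{\lambda}{n}\big)^{p-1}$ is not blown up by a $p$-th power of the ratio --- and the union-bound weights so that, after rounding, the whole inequality is dominated by $\kappa_1 \emprisk_{\leq t^\star} + \frac{p}{p-1}\, m_p^{\nicefrac1p}\big(\kappa_2 \nicefrac{\comp'}{n} + \kappa_3\big)^{\nicefrac{p-1}{p}}$ with exactly the stated constants and \emph{without} degrading the leading factor $\nicefrac{p}{p-1}$ or the empirical term. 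The direction of the rounding requires care: over- versus under-estimating the complexity moves $\emprisk_{\leq \nicefrac{n}{\lambda}}$ and the $\big(\kappa_2\nicefrac{\comp}{n}+\kappa_3\big)^{\nicefrac{p-1}{p}}$ term in opposite directions, and it is precisely the extra room built into $\comp'$ (being strictly larger than $\comp$) that reconciles the two. Once this is set up, the conclusion is immediate from the optimization recalled in the first paragraph, applied with $A = \kappa_2\comp' + \kappa_3 n$.
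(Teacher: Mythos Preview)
Your overall plan---start from \Cref{lemma:alquier_truncation_method_refined_bouned_moment}, legitimize a data-dependent truncation level via a discretization and union bound, then substitute the analytic optimizer---matches the paper's. The divergence is in \emph{what} gets discretized. You lay a geometric grid on $\lambda$; the paper instead partitions the sample space by the \emph{relative entropy}, setting $\cE_k \coloneqq \{k-1 \leq \relent < k\}$ with confidences $\beta_k = \nicefrac{6\beta}{\pi^2 k^2}$. On $\cE_k$ the complexity in \Cref{lemma:alquier_truncation_method_refined_bouned_moment} is bounded by the deterministic quantity $k + \log\nicefrac{\pi^2 k^2 \xi(n)}{6\beta}$, so for each $k$ the parameter $\lambda_k$ can be taken as the \emph{exact} minimizer of $\lambda \mapsto A_k/\lambda + \tfrac{m_p}{p-1}(\lambda/n)^{p-1}$ with $A_k = \kappa_2\big(k + \log\nicefrac{\pi^2 k^2 \xi(n)}{6\beta}\big) + \kappa_3 n$. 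There is no grid on $\lambda$ and hence no rounding: the factor $\nicefrac{p}{p-1}$ drops out directly, and the $p$-dependent grid-ratio obstacle you anticipate never appears. One then uses $k \leq \relent + 1$ on $\cE_k$ and linearizes $x + \log\nicefrac{e\pi^2(x+1)^2}{6\beta}$ by its tangent at $x=19$; this is the source of both the $1.1$ and the $\log(10e\pi^2)$ in $\comp'$.

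In your grid-on-$\lambda$ scheme, by contrast, the index $k$ of the selected grid point scales with $\log\lambda^\star$, which depends on $n$, $m_p$, $p$, and only indirectly on $\relent$; so the step ``$2\log k \leq 0.1\,\relent + \textnormal{const}$'' does not follow from the reason you give (it relies on $k \leq \relent + 1$, which holds only when $k$ indexes $\relent$-slices). More importantly, the $r^{p-1}$ blowup you flag is not bookkeeping but a structural trade-off---grid fineness against the union-bound penalty---that would force either the leading $\nicefrac{p}{p-1}$ or the constants in $\comp'$ to depend on $p$. Discretizing $\relent$ rather than $\lambda$ is precisely the move that eliminates this trade-off and delivers the stated constants uniformly in $p$.
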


In this way, the rate is maintained, while the dependence on the relative entropy changed from linear to polynomial of order $\nicefrac{(p-1)}{p}$. For order $p = 2$, this corresponds to the square root, and only goes to the linear case when $p \to \infty$, when we also achieve a fast rate of $\nicefrac{1}{n}$.

Following the insights of \citep[Section 3.2.4]{rodriguezgalvez2023pacbayes}, we may use~\Cref{th:alquier_truncation_method_refined_adaptive_lambda} to obtain an equivalent result, but in the form of~\Cref{lemma:alquier_truncation_method_refined} that holds \emph{simultaneously} for all $\lambda$.

\begin{theorem}
    \label{th:alquier_truncation_method_refined_simultaneously_all_lambda}
    For every loss with a $p$-th moment bounded by $m_p$, for all $\beta \in (0,1)$, with probability no smaller than $1 - \beta$, 
    \begin{equation*}
    \label{eq:bound_to_alquier_refined_simultaneously_all_lambda}
       \poprisk \leq \kappa_1 \cdot \emprisk_{\leq \nicefrac{n}{\lambda}} + \kappa_2 \cdot \frac{\comp'}{\lambda} + \kappa_3 \cdot \frac{n}{\lambda} +\frac{m_p}{p-1} \cdot \Big( \frac{\lambda}{n} \Big)^{p - 1}
    \end{equation*}
    holds \emph{simultaneously} $\smash{\forall ( \bP_W^S, c, \gamma, \lambda) \in \cP\! \times \! (0, 1] \! \times \! [1, \infty) \! \times \! \bR_+}$.
\end{theorem}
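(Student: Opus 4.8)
The plan is to obtain this statement exactly as the pre-optimization step inside the proof of \Cref{th:alquier_truncation_method_refined_adaptive_lambda}: that proof applies the event-space discretization of \citep[Sec.~3.2.4 and Cor.~15]{rodriguezgalvez2023pacbayes} to the one-parameter family of bounds in \Cref{lemma:alquier_truncation_method_refined_bouned_moment}, first producing a bound valid \emph{simultaneously} over all $\lambda\in\bR_+$, and only afterwards minimizing over $\lambda$; \Cref{th:alquier_truncation_method_refined_simultaneously_all_lambda} is precisely that intermediate bound, and conversely it implies \Cref{th:alquier_truncation_method_refined_adaptive_lambda} via a single data-dependent plug-in, which is the sense in which the two are equivalent.

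Concretely, I would first fix a countable geometric grid $\{\lambda_k\}_{k\ge 1}\subset\bR_+$ and apply \Cref{lemma:alquier_truncation_method_refined_bouned_moment} at each $\lambda_k$ with confidence $\beta_k$ chosen so that $\sum_k\beta_k\le\beta$; a union bound yields, with probability at least $1-\beta$ and simultaneously for all $(\bP_W^S,c,\gamma)\in\cP\times(0,1]\times[1,\infty)$ and all $k$,
\[
\poprisk \le \kappa_1\cdot\emprisk_{\le \nicefrac{n}{\lambda_k}} + \kappa_2\cdot\frac{\relent+\log(\nicefrac{\xi(n)}{\beta_k})}{\lambda_k} + \kappa_3\cdot\frac{n}{\lambda_k} + \frac{m_p}{p-1}\Big(\frac{\lambda_k}{n}\Big)^{p-1}.
\]
Next I would absorb the discretization overhead $\log(\nicefrac{1}{\beta_k})$: choosing the grid points and the weights $\beta_k$ adaptively with respect to geometric bins of $\relent$ and of $\lambda$, and discarding the ``uninteresting events'' exactly as in \citep[Cor.~15]{rodriguezgalvez2023pacbayes}, one replaces $\relent+\log(\nicefrac{\xi(n)}{\beta_k})$ by $\comp' = 1.1\,\relent+\log\nicefrac{10e\pi^2\xi(n)}{\beta}$ at the cost of the $0.1\,\relent$ slack and the explicit constant. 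Finally I would pass from the grid to an arbitrary $\lambda$ by rounding it to the nearest grid point and controlling the mismatch termwise --- $\emprisk_{\le\nicefrac{n}{\lambda}}$ is monotone in $\nicefrac{n}{\lambda}$ and the other three summands are continuous in $\lambda$, so a fine enough geometric grid makes the rounding cost negligible --- which gives the claimed bound for all $\lambda\in\bR_+$ simultaneously.

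For the converse direction (and hence the claimed equivalence), plug the admissible data-dependent choice $\lambda^\star=\big((\kappa_2\comp'+\kappa_3 n)\,n^{p-1}/m_p\big)^{1/p}$ into \Cref{th:alquier_truncation_method_refined_simultaneously_all_lambda}: one checks $\nicefrac{n}{\lambda^\star}=t^\star$ and that $\lambda^\star$ minimizes $\lambda\mapsto\nicefrac{(\kappa_2\comp'+\kappa_3 n)}{\lambda}+\tfrac{m_p}{p-1}(\nicefrac{\lambda}{n})^{p-1}$ with minimal value $\tfrac{p}{p-1}m_p^{1/p}(\kappa_2\cdot\nicefrac{\comp'}{n}+\kappa_3)^{(p-1)/p}$, recovering \Cref{th:alquier_truncation_method_refined_adaptive_lambda} verbatim. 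The main obstacle is the quantitative bookkeeping of the middle step: designing $\{\lambda_k\}$ and $\{\beta_k\}$ so that, all at once, the weights sum to at most $\beta$, the per-grid-point penalty $\log(\nicefrac{1}{\beta_k})$ is absorbed into the $0.1\,\relent$ slack together with the constant $10e\pi^2$, and the grid stays fine enough for the rounding to be free; but this is exactly the argument already carried out for \Cref{th:alquier_truncation_method_refined_adaptive_lambda}, so no new idea is needed beyond stopping before the minimization over $\lambda$.
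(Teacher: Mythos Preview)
Your proposal reverses the paper's logical order. The paper does not obtain \Cref{th:alquier_truncation_method_refined_simultaneously_all_lambda} as an intermediate step en route to \Cref{th:alquier_truncation_method_refined_adaptive_lambda}; rather, it first proves \Cref{th:alquier_truncation_method_refined_adaptive_lambda} in full (via the argument in \Cref{app:proof_alquier_truncation_method_refined_adaptive_lambda}) and then deduces \Cref{th:alquier_truncation_method_refined_simultaneously_all_lambda} from it, following \citep[Section~3.2.4]{rodriguezgalvez2023pacbayes}. The deduction is that the right-hand side of \Cref{th:alquier_truncation_method_refined_adaptive_lambda} is exactly the right-hand side of \Cref{th:alquier_truncation_method_refined_simultaneously_all_lambda} evaluated at the minimizing $\lambda^\star$, so the bound at an arbitrary $\lambda$ is only weaker and therefore holds on the same high-probability event. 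Your converse plug-in is correct and is precisely this equivalence --- just read in the direction the paper actually uses.

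You also misread the proof of \Cref{th:alquier_truncation_method_refined_adaptive_lambda}: it does not grid $\lambda$. It partitions the range of $\relent$ into events $\cE_k = \{k-1 \le \relent < k\}$, applies \Cref{lemma:alquier_truncation_method_refined_bouned_moment} once per $k$ at confidence $\beta_k = \tfrac{6\beta}{\pi^2 k^2}$ with a \emph{single} deterministic $\lambda_k$, and then sets $\lambda_k$ to the minimizer of the non-random part for that bin --- the optimization is inside each bin, not a final step after a uniform-in-$\lambda$ bound. There is thus no intermediate stage at which a bound valid for all $\lambda$ appears, and nothing to ``stop before''. Your geometric-grid-over-$\lambda$ plan is a genuinely different route; it may be workable, but it is not ``exactly the argument already carried out'': the constants $1.1$ and $10e\pi^2$ in $\comp'$ arise specifically from binning $\relent$, so a grid over $\lambda$ (or a double discretization) would need its own budget and produce different constants; and the rounding step is not free, since $\emprisk_{\le n/\lambda}$ is non-increasing in $\lambda$ while $\tfrac{m_p}{p-1}(\lambda/n)^{p-1}$ is increasing, so neither rounding up nor down to the grid dominates termwise.
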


From \Cref{th:alquier_truncation_method_refined_simultaneously_all_lambda}, we understand that the posterior that optimizes both \Cref{th:alquier_truncation_method_refined_adaptive_lambda,th:alquier_truncation_method_refined_simultaneously_all_lambda} is the Gibbs posterior $\rmd \bP_W^{S=s}(w) \propto \rmd \bQ_W(w) e^{- \frac{\lambda}{2} \cdot \frac{\kappa_1}{\kappa_2} \emprisk_{\leq \nicefrac{n}{\lambda}}(w,s)}$, where now $c$, $\lambda$, and $\gamma$ can be chosen \emph{adaptively after} observing the realization of the data $s$. This way, the choice of the parameter $\lambda$ can be made to optimize the bound emerging from that data realization. On the other hand, the Gibbs distribution emerging from the optimization of \Cref{lemma:alquier_truncation_method_refined} needs to commit to a \emph{fixed} parameter $\lambda$ \emph{before} observing the training data and is data-independent.

\subsection{The case \texorpdfstring{$p \to \infty$}{} and essentially bounded losses}
\label{subsec:case_p_infinity}

So far we only considered the algorithm-independent condition of losses with a bounded $p$-th moment $\bE \ell(w,Z)^p$ for all $w \in \cW$. This condition only depends on the loss and the problem distribution $\bP_Z$. Nonetheless, all the previous results can be replicated under the weaker condition that the loss has a bounded $p$-th moment with respect to the algorithm's output, that is, that $m_p' \coloneqq \bE^S \ell(W,Z)^p$ is bounded $\bP_S$-a.s.

Although this condition is weaker, it is harder to guarantee as it requires some knowledge of the data distribution $\bP_Z$ \emph{and} the algorithm's Markov kernel $\bP_W^S$. This knowledge could instead be used to directly find a bound on $\poprisk = \bE^S \ell(W,Z)$. 

\looseness=-1 However, results under this condition can be useful in some situations. For example, they can be used to derive new results for losses with a bounded variance (as shown later in~\Cref{sec:bounds_p_equal_2}) and to obtain more meaningful findings when $p \to \infty$.

\Cref{th:alquier_truncation_method_refined_adaptive_lambda}, when specialized to $p \to \infty$, gives us a fast rate result when the loss is $\bP_Z$-a.s. bounded, that is, when $\esssup \ell(w,Z) < \infty$ for all $w \in \cW$. 
This condition of the loss being $\bP_Z$-\emph{essentially bounded} can be a strong requirement, similar to the one of bounded losses.
However, when we have more information about the algorithm, then %
we can obtain a fast rate result when the loss is $\bP_{W,S} \otimes \bP_Z$-a.s. bounded, that is, when $\esssup \ell(W,Z) < v$. 
This condition is much weaker than the previous essential boundedness or just boundedness of the loss. Namely, one needs to know that the algorithm is such that $\bP ( \ell(W,Z) < v ) = 1$. As an example, consider the squared loss $\ell(w,z) = (w - z)^2$ and some data that belongs to some interval of length 1 with probability 1, that is $\bP ( Z \in [a, a+1])=1$, but where we ignore $a$. Consider $w \in \bR$, the simple algorithm that returns the average of the training instances $\bA(s) = \sum_{i=1}^n \nicefrac{z_i}{n}$ ensures that $\esssup \ell(W,Z) < 1$, while $\sup_{w \in \bR} \esssup \ell(w,Z) \to \infty$.

\section{Losses with a bounded variance}
\label{sec:bounds_p_equal_2}

\begin{figure*}[t]
    \centering
    \includegraphics[width=0.24\textwidth]{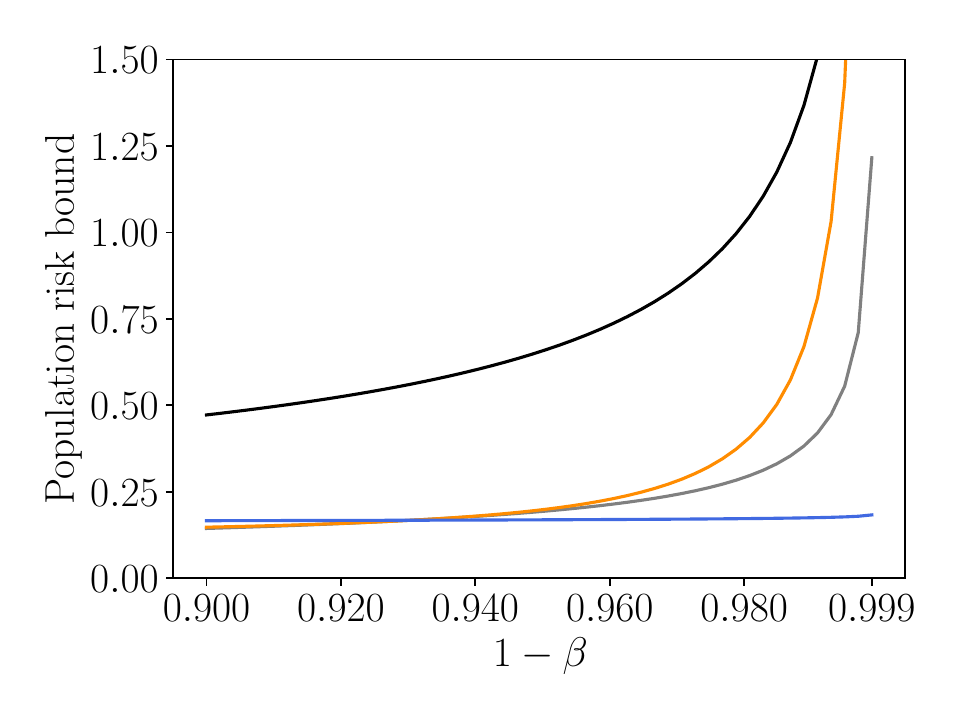}
    \includegraphics[width=0.24\textwidth]{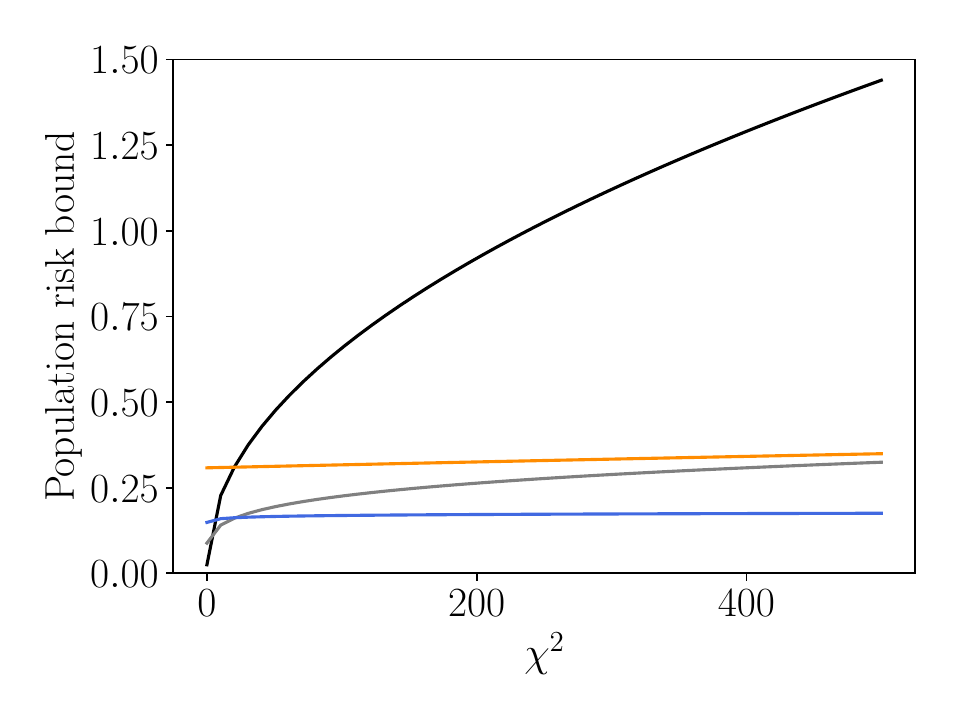}
    \includegraphics[width=0.24\textwidth]{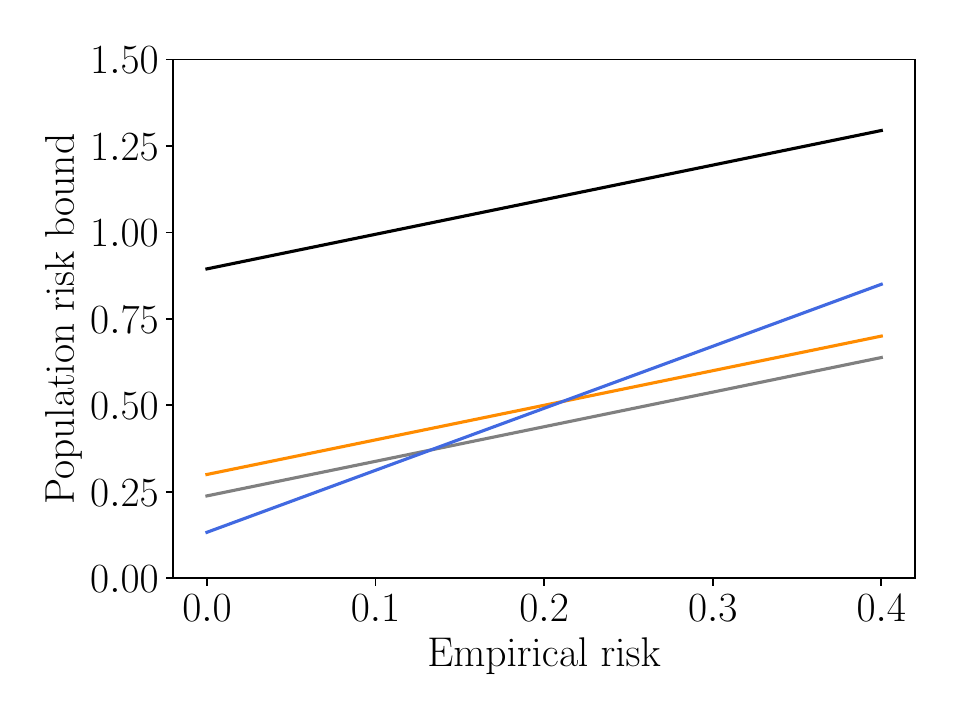}
    \includegraphics[width=0.24\textwidth]{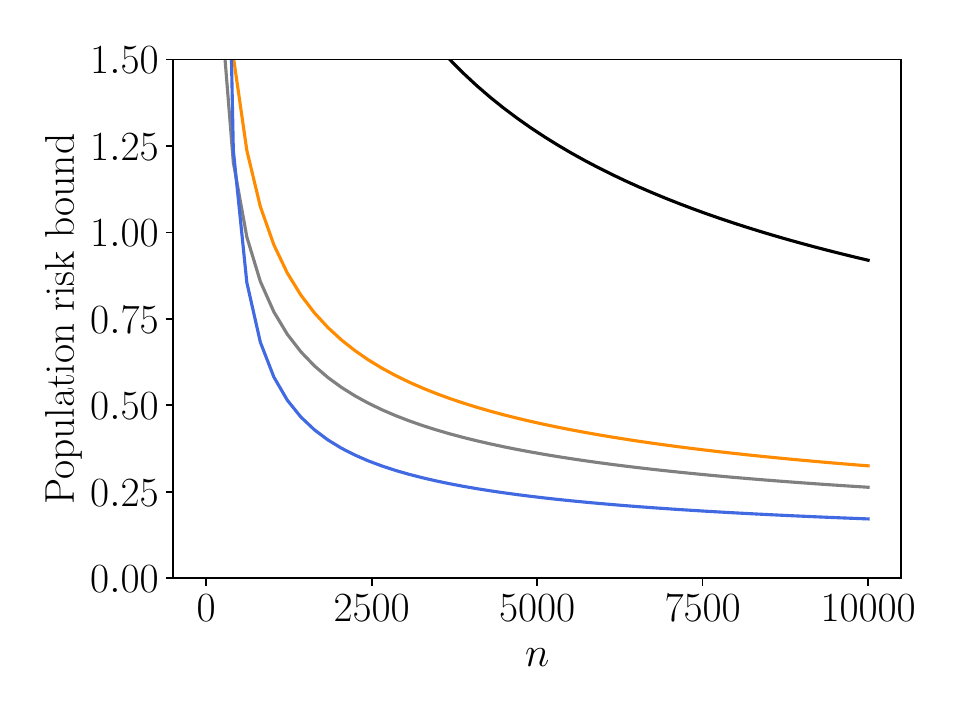}
    \caption{Illustration comparing \citep{alquier2018simpler, ohnishi2021novel} (\eqref{eq:bounded_variance_alquier} in black, \eqref{eq:bounded_variance_honorio_1} in gray, and \eqref{eq:bounded_variance_honorio_2} in orange) and our \Cref{th:bounded_variance_high_probability} (in blue) for varying values of $\beta$, $\chi^2$, $\emprisk$, and $n$. To help the comparison, we actually use the upper bound relaxation~\eqref{eq:relaxation_of_bounded_variance_high_probability} of \Cref{th:bounded_variance_high_probability}. When they are not varying, the values of the parameters are fixed to $\beta = 0.025$, $\chi^2 = 200$, $\emprisk = 0.025$, $n=10,000$, and $\sigma^2 = 1$.}
    \label{fig:comparison_bounds_bounded_variance}
\end{figure*}

\looseness=-1 A particularly important case is the one of losses with a bounded second moment. \Cref{th:alquier_truncation_method_refined_adaptive_lambda} recovers the expected rate of $\sqrt{\nicefrac{m_2 \relent}{n}}$ from~\citep[Theorem 11]{rodriguezgalvez2023pacbayes}. This is the smallest moment with a rate no slower than $\nicefrac{1}{\sqrt{n}}$. However, as mentioned in~\citep{rodriguezgalvez2023pacbayes}, the raw second moment $m_2$ can be much larger than the \emph{variance}, or central second moment. When the variance is bounded, that is $\bE (\ell(w,Z) - \bE \ell(w,Z))^2 \leq \sigma^2 < \infty$ for all $w \in \cW$, the only PAC-Bayesian results we are aware of are \cite{alquier2018simpler, ohnishi2021novel}.

\begin{theorem}[{\citet[Theorem 1]{alquier2018simpler} and \citet[Corollary 2]{ohnishi2021novel}}]
    \label{th:bounded_variance_not_high_probability}
    \looseness=-1 For every loss with a variance bounded by $\sigma^2$, for all $\beta \in (0,1)$, with probability no smaller than $1 - \beta$, %
    each of the inequalities
    \begin{align}
        \label{eq:bounded_variance_alquier}
        \poprisk &\leq \emprisk + \sqrt{\frac{\sigma^2 (\chi^2 + 1)}{n \beta}} %
        \\
        \label{eq:bounded_variance_honorio_1}
        \poprisk &\leq \emprisk + \sqrt{\frac{\sigma^2 \sqrt{\chi^2 + 1}}{n \beta}} %
        \\
        \label{eq:bounded_variance_honorio_2}
        \poprisk &\leq \emprisk + \sqrt{\frac{\chi^2 + \big(\nicefrac{\sigma^2}{\beta}\big)^2}{2n}} %
    \end{align}
    hold \emph{simultaneously} $\forall \ \bP_W^S \in \cP$,
    where $\chi^2 \coloneqq \chi^2 (\bP_W^S, \bQ_W)$ is the chi-squared divergence.%
    \footnote{The result is originally given by $\mathrm{Var}^S(\ell(W,Z))$, which usually requires too much knowledge on the algorithm and data distributions. We presented it with the algorithm-independent variance $\sigma^2 \geq \mathrm{Var}^S(\ell(W,Z))$.}
\end{theorem}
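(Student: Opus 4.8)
The plan is to obtain all three inequalities from a single recipe, of which \eqref{eq:bounded_variance_alquier} is the cleanest instance: (i) a change-of-measure step that trades expectations under the posterior $\bP_W^S$ for expectations under the prior $\bQ_W$ at the price of the chi-squared divergence; (ii) the observation that the prior-side quantity produced this way does not depend on $\bP_W^S$, so a single high-probability event validates the bound \emph{simultaneously} for all $\bP_W^S\in\cP$; and (iii) a Markov step over the draw of $S$, which is the only place where $\nicefrac1\beta$ and the variance $\sigma^2$ enter.

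First I would fix a realization $s$ of $S$ and write the generalization gap as $\poprisk-\emprisk=\bE^S[\poprisk(W)-\emprisk(W,S)]=\bE_{w\sim\bQ_W}\!\big[\tfrac{\rmd\bP_W^S}{\rmd\bQ_W}(w)\,\phi_s(w)\big]$, with $\phi_s(w)\coloneqq\poprisk(w)-\emprisk(w,s)$. Cauchy–Schwarz under $\bQ_W$, together with the identity $\bE_{\bQ_W}\big[(\rmd\bP_W^S/\rmd\bQ_W)^2\big]=\chi^2(\bP_W^S,\bQ_W)+1$, gives $\poprisk-\emprisk\le\sqrt{\chi^2+1}\,\sqrt{T(s)}$ with $T(s)\coloneqq\bE_{\bQ_W}[\phi_s(w)^2]$. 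The key point is that $T(s)$ is a functional of $s$ and $\bQ_W$ only — not of the posterior — so it suffices to control $T(S)$ with high probability.

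Second, I would bound $T$ in mean over $S$: since the $z_i$ are i.i.d.\ and $\bE_S[\emprisk(w,S)]=\poprisk(w)$, Tonelli's theorem yields $\bE_S[T(S)]=\bE_{\bQ_W}\!\big[\mathrm{Var}_S(\emprisk(w,S))\big]=\tfrac1n\,\bE_{\bQ_W}[\mathrm{Var}(\ell(w,Z))]\le\tfrac{\sigma^2}{n}$; this is also where the relaxation announced in the footnote — from the original $\mathrm{Var}^S(\ell(W,Z))$ to the uniform $\sigma^2$ — is used. Markov's inequality gives $\bP_S\big[T(S)>\tfrac{\sigma^2}{n\beta}\big]\le\beta$, and on the complementary event, which is independent of the choice of posterior, $\poprisk\le\emprisk+\sqrt{\tfrac{\sigma^2(\chi^2+1)}{n\beta}}$ holds for every $\bP_W^S\in\cP$, that is, \eqref{eq:bounded_variance_alquier}.

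For \eqref{eq:bounded_variance_honorio_1} and \eqref{eq:bounded_variance_honorio_2} I would rerun the same three steps after replacing the change-of-measure inequality. Iterating Cauchy–Schwarz once more (or, equivalently, using Hölder at a different conjugate pair) trades one power of $\chi^2+1$ for a higher $\bQ_W$-moment of $\phi_s$ and, once the Markov exponent is adjusted to match, produces the $\sqrt{\chi^2+1}$ dependence of \eqref{eq:bounded_variance_honorio_1}; whereas a variational, Donsker–Varadhan-type change-of-measure inequality for the chi-squared divergence — which places the divergence \emph{additively}, as in McAllester's bound — together with a Young/AM–GM split of the resulting free scale and Markov applied to the squared gap, yields the additive form \eqref{eq:bounded_variance_honorio_2}. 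I expect the main obstacle to be bookkeeping rather than ideas: tuning the Hölder/Markov exponents and the scale in the variational split so the constants coincide \emph{exactly} with \citep[Theorem 1]{alquier2018simpler} and \citep[Corollary 2]{ohnishi2021novel}, and checking the integrability hypotheses ($\chi^2<\infty$ and finiteness of the moments invoked) under which the density-ratio identities and the interchanges of expectation are valid.
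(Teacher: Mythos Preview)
The paper does not prove this theorem; it is quoted verbatim from \citet{alquier2018simpler} and \citet{ohnishi2021novel} purely as a benchmark against which the paper's own \Cref{th:bounded_variance_high_probability} is compared. So there is no in-paper argument to match. That said, your derivation of \eqref{eq:bounded_variance_alquier} is exactly the standard one in \citep{alquier2018simpler}: Cauchy--Schwarz under $\bQ_W$ isolates the posterior-free statistic $T(S)=\bE_{\bQ_W}[\phi_S(w)^2]$, whose mean is at most $\sigma^2/n$, and a single Markov step over $S$ yields the bound simultaneously for all posteriors.

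Your sketch for \eqref{eq:bounded_variance_honorio_1}, however, has a real gap. ``Iterating Cauchy--Schwarz once more'' to drop from $(\chi^2+1)^{1/2}$ to $(\chi^2+1)^{1/4}$ forces control of a fourth $\bQ_W$-moment of $\phi_S$: concretely, $(\bE_P\phi)^2\le\bE_P[\phi^2]=\bE_Q\big[\tfrac{\rmd\bP_W^S}{\rmd\bQ_W}\,\phi^2\big]\le\sqrt{\chi^2+1}\,\sqrt{\bE_Q[\phi^4]}$, and a bounded variance gives you no handle on $\bE_Q[\phi^4]$ or on $\bE_S\bE_Q[\phi^4]$. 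More generally, H\"older with any conjugate pair that yields $(\chi^2+1)^{1/4}$ on the density-ratio side will demand either a higher-order divergence (e.g.\ $\bE_Q[(\rmd\bP_W^S/\rmd\bQ_W)^4]$) or a higher-than-second moment of $\phi$, neither of which is assumed. The route in \citet{ohnishi2021novel} is not a second H\"older pass but an $f$-divergence variational inequality applied to the gap itself; your plan for \eqref{eq:bounded_variance_honorio_2} (variational $\chi^2$ representation plus a Young/AM--GM split, then Markov) is much closer to how those authors actually proceed, and is the template you should also use for \eqref{eq:bounded_variance_honorio_1}.
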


Although this bound still achieves an expected slow rate of $\nicefrac{1}{\sqrt{n}}$, there are two main differences between this theorem and those presented in the preceding sections. First, and most notable, the dependence with the confidence penalty $\nicefrac{1}{\beta}$ is not logarithmic, but polynomial. This can result in a loose bound when high confidence is demanded: for example, for $\beta = 0.05$ we have that $\log \nicefrac{1}{\beta} \approx 3$ while $\nicefrac{1}{\beta} = 20$. Second, the dependency measure changed from the relative entropy $\relent$ to the chi-squared divergence $\chi^2$. The chi-squared divergence also measures the dissimilarity between the posterior $\bP_W^S$ and the prior $\bQ_W$, but it can be much larger. More precisely,
\begin{equation}
    \label{eq:relent_chi_squared_bound}
    0 \leq \relent \leq \log(1 + \chi^2) \leq \chi^2
\end{equation}
and no lower bound of the relative entropy $\relent$ is possible in terms of the chi-squared divergence $\chi^2$~\citep[Section 7.7]{polyanskiy2022lecture}. %

Studying \Cref{th:alquier_truncation_method_refined_adaptive_lambda} with the weaker condition that $\bE \ell(W,Z)^2 \leq m_2'$ as discussed in~\Cref{subsec:case_p_infinity}, we can obtain a high-probability PAC-Bayes bound for losses with a bounded variance that has the relative entropy as the dependency measure. As in the previous section, the method and proof technique also extends to an analysis starting from \Cref{lemma:alquier_truncation_method_modified_bounded_moment} resulting in slightly different constants and using $\emprisk_{p, \nicefrac{n}{\lambda}}$ as an estimator instead of $\emprisk_{\leq \nicefrac{n}{\lambda}}$. Similarly, the method also extends to the semi-empirical bound from~\citep[Theorem 11]{rodriguezgalvez2023pacbayes}.

\begin{theorem}
    \label{th:bounded_variance_high_probability}
    For every loss with a variance bounded by $\sigma^2$, for all $\beta \in (0,1)$, with probability no smaller than $1 - \beta$,
    \begin{equation*}
        \poprisk \leq \Big[ 1 - 2 \sqrt{\comp''} \Big]_+^{-1} \Big[ \kappa_1 \cdot \emprisk  + 2 \sqrt{\sigma^2 \comp''} \Big]
    \end{equation*}
    holds \emph{simultaneously} $\forall (\bP_W^S, c, \gamma) \in \cP \times (0, 1] \times [1, \infty)$,
    where $\comp'' \coloneqq \nicefrac{\kappa_2 \comp'}{n} + \kappa_3$.
\end{theorem}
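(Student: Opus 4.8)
The plan is to instantiate \Cref{th:alquier_truncation_method_refined_adaptive_lambda} at $p = 2$, but with the \emph{algorithm-dependent} second moment $m_2' \coloneqq \bE^S \ell(W,Z)^2$ in place of the algorithm-independent $m_2$; this substitution is legitimate by the discussion in \Cref{subsec:case_p_infinity}, since the inequality there holds between random variables and $m_2'$ is finite $\bP_S$-a.s. whenever the variance is bounded. At $p = 2$ one has $\nicefrac{p}{p-1} = 2$ and $\nicefrac{p-1}{p} = \nicefrac{1}{p} = \nicefrac{1}{2}$, so \Cref{th:alquier_truncation_method_refined_adaptive_lambda} gives, simultaneously over $(\bP_W^S, c, \gamma) \in \cP \times (0,1] \times [1,\infty)$ and with probability at least $1-\beta$,
\begin{equation*}
    \poprisk \leq \kappa_1 \cdot \emprisk_{\leq t^\star} + 2\sqrt{m_2' \, \comp''}, \qquad t^\star = \sqrt{\nicefrac{m_2'}{\comp''}},
\end{equation*}
with $\comp'' = \nicefrac{\kappa_2 \comp'}{n} + \kappa_3$ as in the statement.

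Next I would eliminate the two data-dependent quantities on the right-hand side. Since truncation only decreases the loss pointwise, $\emprisk_{\leq t^\star} \leq \emprisk$. For the moment term, I would write the second moment of $\ell(W,Z)$ under the conditional law $\bP^S$ as variance plus squared mean: $m_2' = \mathrm{Var}^S\!\big(\ell(W,Z)\big) + \big(\bE^S \ell(W,Z)\big)^2 = \mathrm{Var}^S\!\big(\ell(W,Z)\big) + \poprisk^2 \leq \sigma^2 + \poprisk^2$, using the bounded-variance hypothesis. Substituting both facts and applying subadditivity of the square root together with $\poprisk \geq 0$ (so that $\sqrt{(\sigma^2 + \poprisk^2)\comp''} \leq \sqrt{\sigma^2 \comp''} + \poprisk\sqrt{\comp''}$) yields
\begin{equation*}
    \poprisk \leq \kappa_1 \cdot \emprisk + 2\sqrt{\sigma^2 \comp''} + 2\poprisk \sqrt{\comp''}.
\end{equation*}

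The last step is to resolve this self-referential inequality for $\poprisk$. Collecting the $\poprisk$ terms gives $\big(1 - 2\sqrt{\comp''}\big)\poprisk \leq \kappa_1 \cdot \emprisk + 2\sqrt{\sigma^2\comp''}$. If $1 - 2\sqrt{\comp''} > 0$, dividing through produces exactly the claimed bound; if $1 - 2\sqrt{\comp''} \leq 0$, then the right-hand side of the claim equals $+\infty$ under the convention $[x]_+^{-1} = +\infty$ for $x \leq 0$ (which is what the $[\,\cdot\,]_+^{-1}$ in the statement encodes), while the rearranged inequality is vacuous because $\poprisk \geq 0$; either way the claim holds. I expect this final step --- handling the self-reference and the degenerate regime cleanly --- to be the only point that needs care; everything preceding it is routine bookkeeping. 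Running the same template from \Cref{lemma:alquier_truncation_method_modified_bounded_moment} instead, or from the semi-empirical bound of \citep[Theorem 11]{rodriguezgalvez2023pacbayes}, would give the variants noted in the surrounding text, with the truncated empirical risk replaced accordingly and with slightly different numerical constants.
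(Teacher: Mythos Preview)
Your proposal is correct and follows essentially the same route as the paper's proof: instantiate \Cref{th:alquier_truncation_method_refined_adaptive_lambda} at $p=2$ with the algorithm-dependent moment $m_2'=\mathrm{Var}^S(\ell(W,Z))+\poprisk^2$, bound the truncated empirical risk by $\emprisk$, apply $\sqrt{x+y}\leq\sqrt{x}+\sqrt{y}$ together with the variance bound, and then rearrange while handling the degenerate regime via the $[\,\cdot\,]_+^{-1}$ convention. The only cosmetic difference is that you make the step $\emprisk_{\leq t^\star}\leq\emprisk$ explicit, whereas the paper absorbs it silently into its ``relaxed version'' of \Cref{th:alquier_truncation_method_refined_adaptive_lambda}.
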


\begin{proof}[Sketch of the proof]
    Consider the relaxed version of \Cref{th:alquier_truncation_method_refined_adaptive_lambda} from \Cref{subsec:case_p_infinity} for $p=2$ and note that $m_2' = \textnormal{Var}^S(\ell(W,Z)) + \poprisk^2$. Then, for all $\beta \in (0,1)$, with probability no smaller than $1 - \beta$ we have
    \begin{equation*}
        \poprisk \leq \kappa_1 \cdot \emprisk + 2 \sqrt{\big(\textnormal{Var}^S(\ell(W,Z)) + \poprisk^2 \big) \cdot \comp''}
    \end{equation*}
    simultaneously for all $c \in (0,1]$ and all $\gamma > 1$, where $\comp''$ is defined as in the theorem statement. Then, we may employ the inequality $\sqrt{x + y} \leq \sqrt{x} + \sqrt{y}$ to separate the square root and the inequality $\mathrm{Var}^S(\ell(W,Z)) \leq \sup_{w \in \cW} \mathrm{Var}(\ell(w,Z)) = \sigma^2$ to obtain our algorithm-independent variance. After that, re-arranging the equation and accepting the convention that $1/0 \to \infty$ completes the proof. The full proof is in~\Cref{app:proof_bounded_variance_high_probability}.
\end{proof}

Although the \Cref{th:bounded_variance_high_probability} is of high probability and considers the relative entropy, it is hard to compare \Cref{th:bounded_variance_not_high_probability} due to the first factor $[ 1 -  2 (\comp'')^{\nicefrac{1}{2}}]_+^{-1}$. This factor ensures the bound is only useful when $ 2 (\comp'')^{\nicefrac{1}{2}} < 1$, which is the range where the bound would be effective without the said factor anyway. To effectively compare the two bounds, we bound \Cref{th:bounded_variance_high_probability} from above using the relative entropy upper bound~\eqref{eq:relent_chi_squared_bound}, that is,
\begin{equation}
\label{eq:relaxation_of_bounded_variance_high_probability}
     \poprisk \leq \Big[ 1 - 2 \sqrt{\mathfrak{C}_{n,\beta,S,\chi^2}''} \Big]_+^{-1} \Big[ \kappa_1 \cdot \emprisk  + 2 \sqrt{\sigma^2 \mathfrak{C}_{n,\beta,S,\chi^2}''} \Big]
\end{equation}
where 
\begin{equation*}
    \mathfrak{C}_{n,\beta,S,\chi^2}'' \coloneqq \kappa_2 \cdot \frac{1.1 \log (1 + \chi^2) + \log \frac{10 e \pi^2 \xi(n)}{\beta}}{n} + \kappa_3.
\end{equation*}
Also, we fix $c = 1$ and $\gamma = \nicefrac{e}{(e - 1)}$. 
Even with this relaxation, the presented high probability bound is tighter than \Cref{th:bounded_variance_not_high_probability} in many regimes (see \Cref{fig:comparison_bounds_bounded_variance}).

\section{Extension of the results}
\label{subsec:extension_of_the_bounds}

Although \citet{alquier2006transductive} devised the truncation method for PAC-Bayes bounds and we presented our results in this setting, there is nothing stopping us to use this technique to derive bounds in expectation or single-draw PAC-Bayes bounds.

Bounds in expectation and single-draw PAC-Bayes bounds are similar to the PAC-Bayes bounds from~\Cref{subsec:pac-bayesian-bounds} and~\eqref{eq:mc_allester}. Bounds in expectation provide weaker generalization guarantees. Here, the \emph{expected} population risk $\bE  \poprisk(W) $ is bounded using the \emph{expected} empirical risk $\bE \emprisk(W,S)$. That is, the bound holds \emph{on average} over the draw of the random training set $S$ and the returned hypothesis $W$, and there is no confidence parameter. Single-draw PAC-Bayes bounds, on the other hand, provide stronger generalization guarantees. More precisely, they provide guarantees for the population risk $\poprisk(W)$ that hold with probability $1 - \beta$ with respect to the draw of the random training set $S$ and the returned hypothesis $W$.

\looseness=-1 In \Cref{app:extension_bounds_in_expectation,app:extension_single_draw_pac_bayes_bounds}, we derive ``in expectation'' and ``single-draw PAC-Bayes'' analogues to the PAC-Bayes fast rate bound from~\citep{rodriguezgalvez2023pacbayes}. Then, all the presented results extend to those settings routinely, and they are collected in the appendix for completeness.

\bibliographystyle{IEEEtranN}
{\footnotesize \balance \bibliography{references}}

\newpage
\appendix

\section{Omitted proofs}

\subsection{Proof of \texorpdfstring{\Cref{th:alquier_truncation_method_refined_adaptive_lambda}}{Theorem 2}}
\label{app:proof_alquier_truncation_method_refined_adaptive_lambda}

\looseness=-1 Consider~\eqref{eq:bound_to_alquier_refined_bouned_moment} from~\Cref{lemma:alquier_truncation_method_refined_bouned_moment} and note that the only random element is $\comp$. Let $\cB_k$ be the complement of the event in~\eqref{eq:bound_to_alquier_refined_bouned_moment} with parameters $\beta_k \in (0,1)$ and $\lambda_k > 0$ such that $\bP[\cB_k] < \beta_k$.
Then, further let $\beta_k = \nicefrac{6}{\pi^2} \cdot \nicefrac{\beta}{k^2}$ and define the sub-events $\cE_k \coloneqq \{ k-1 \leq \relent < k \}$ and the indices $\cK \coloneqq \{s \in \cZ^n : k \in \bN : \bP[\cE_k] > 0 \}$. In this way, for all $\beta \in (0,1)$ and all $\lambda_k > 0$, with probability no larger than $\bP[\cB_k | \cE_k]$, there exists a posterior $\bP_W^S \in \cP$, a $c \in (0,1]$, and a $\gamma > 1$ such that
\begin{equation*}
    \poprisk > \kappa_1 \cdot \emprisk_{\leq \frac{n}{\lambda_k}} + \kappa_2 \cdot \tfrac{k + \log \frac{\pi^2 \xi(n) k^2}{6 \beta}}{\lambda_k} + \kappa_3 \cdot \tfrac{n}{\lambda_k} + \frac{m_p}{p-1} \Big( \frac{\lambda_k}{n} \Big)^{p - 1}.
\end{equation*}

Optimizing the parameter $\lambda_k$ guarantees that for all $\beta \in (0,1)$ and all $\lambda_k > 0$, with probability no larger than $\bP[\cB_k | \cE_k]$, there exists a posterior $\bP_W^S \in \cP$, a $c \in (0,1]$, and a $\gamma > 1$ such that
\begin{equation*}
    \poprisk > \kappa_1 \cdot \emprisk_{\leq t_k^{\star}} + m_p^{\frac{1}{p}} \Big(\frac{p}{p-1}\Big) \Big( \kappa_2 \cdot \frac{k + \log \frac{\pi^2 \xi(n) k^2}{6 \beta}}{n} + \kappa_3 \Big)^{\frac{p-1}{p}},
\end{equation*}
where
\begin{equation*}
    t_k^\star \coloneqq m_p^{\frac{1}{p}} \Big( \kappa_2 \cdot \frac{k + \log \frac{\pi^2 \xi(n) k^2}{6 \beta}}{n} + \kappa_3 \Big)^{-\frac{1}{p}}.
\end{equation*}

Then, noting that $k \leq \relent + 1$ given $\cE_k$, noting that the inequality $x + \log \frac{e \pi^2 (x+1)^2}{6 \beta} \leq \big(\frac{a+3}{a+1}\big) x + \log \frac{e\pi^2 (a+1)^2}{6 \beta} - \frac{2a}{a+1}$ holds for all $a >0$, and using this inequality with $a = 19$, we have that for all $\beta \in (0,1)$ and all $\lambda_k > 0$, with probability no larger than $\bP[\cB_k | \cE_k]$, there exists a posterior $\bP_W^S \in \cP$, a $c \in (0,1]$, and a $\gamma > 1$ such that
\begin{equation}
\label{eq:bounded_moment_lambda_optimization}
    \poprisk > \kappa_1 \cdot \emprisk_{\leq t^\star} + m_p^{\frac{1}{p}} \Big(\frac{p}{p-1}\Big) \Big( \kappa_2 \cdot \frac{\comp'}{n} + \kappa_3 \Big)^{\frac{p-1}{p}},
\end{equation}
where $$t^\star \coloneqq m_p^{\frac{1}{p}} \Big( \kappa_2 \cdot \frac{\comp'}{n} + \kappa_3 \Big)^{-\frac{1}{p}}.$$

If we let $\cB$ be the event described by~\eqref{eq:bounded_moment_lambda_optimization}, we can bound its probability by
\begin{align*}
    \bP[\cB] &= \sum_{k \in \cK} \bP[\cB | \cE_k] \bP[\cE_k] \leq \sum_{k \in \cK} \bP[\cB_k | \cE_k] \bP[\cE_k] \leq \sum_{k \in \cK} \bP[ \cB_k]
\end{align*}
and therefore $\bP[\cB] < \beta$, which completes the proof. \hfill $\qedsymbol$

\subsection{Proof of \texorpdfstring{\Cref{th:bounded_variance_high_probability}}{Theorem 5}}
\label{app:proof_bounded_variance_high_probability}

Consider the relaxed version of \Cref{th:alquier_truncation_method_refined_adaptive_lambda} from \Cref{subsec:case_p_infinity} for $p=2$ and note that $m_2' = \textnormal{Var}^S(\ell(W,Z)) + \poprisk^2$. Then, for all $\beta \in (0,1)$, with probability no smaller than $1 - \beta$,
\begin{equation*}
    \poprisk \leq \kappa_1 \cdot \emprisk + 2 \sqrt{\big(\textnormal{Var}^S(\ell(W,Z)) + \poprisk^2 \big) \cdot \comp''}
\end{equation*}
holds simultaneously for all posteriors $\bP_W^S$, all $c \in (0,1]$, and all $\gamma > 1$, where $\comp''$ is defined as in the theorem statement. 

Then, we may employ the inequality $\sqrt{x + y} \leq \sqrt{x} + \sqrt{y}$ to separate the square root and the inequality $$\mathrm{Var}^S(\ell(W,Z)) \leq \sup_{w \in \cW} \mathrm{Var}(\ell(w,Z)) = \sigma^2$$ to obtain our algorithm-independent variance. In this way, for all $\beta \in (0,1)$, with probability no smaller than $1 - \beta$,
\begin{equation*}
    \poprisk \leq \kappa_1 \cdot \emprisk + 2 \sqrt{\sigma^2 \cdot \comp''} + 2 \poprisk \cdot \sqrt{\comp''}
\end{equation*}
holds simultaneously for all posteriors $\bP_W^S$, all $c \in (0,1]$, and all $\gamma > 1$.

Re-arranging the equation proves the theorem statement: when $1 \geq 2 (\comp'')^{\nicefrac{1}{2}}$, the theorem holds by the reasoning above, and when $1 \leq 2 (\comp'')^{\nicefrac{1}{2}}$, the theorem holds trivially by the convention that $\nicefrac{1}{0} \to \infty$.

\subsection{Extension to bounds in expectation}
\label{app:extension_bounds_in_expectation}

To start, we first obtain an ``in expectation'' analogue to the PAC-Bayes fast rate bound from~\cite{rodriguezgalvez2023pacbayes}.

\begin{theorem}
    \label{th:fast_rate_mi}
    For every loss with a range bounded in $[0,b]$, the inequality
    \begin{equation*}
        \bE[\poprisk(W)] \leq \kappa_1 \cdot \bE[ \emprisk(W,S)] + b \left[ \kappa_2 \cdot \frac{\minf(W;S)}{n} + \kappa_3 \right]
    \end{equation*}
    holds for all $c \in (0,1]$ and all $\gamma > 1$, where $\kappa_1 \coloneqq c \gamma \log \big( \nicefrac{\gamma}{(\gamma -1)} \big)$, $\kappa_2 \coloneqq c \gamma$, and $\kappa_3 \coloneqq \gamma \big( 1 - c(1 - \log c)\big)$.
\end{theorem}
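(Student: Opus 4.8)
The plan is to replay the proof of the high-probability fast-rate bound from \citep[Theorem 7]{rodriguezgalvez2023pacbayes} with two changes. The appeal to Markov's inequality over the draw of $S$ is replaced by simply taking an expectation over $S$, so that neither a confidence parameter nor a $\log\nicefrac{\xi(n)}{\beta}$ term survives; and the data-free reference measure in the change-of-measure step is taken to be the marginal $\bP_W$ of the output hypothesis $W=\bA(S)$, which turns the divergence contribution into the mutual information, because $\minf(W;S)=\bE_S[\relent(\bP_W^S\Vert\bP_W)]$ and $\bP_W$ minimizes $\bQ\mapsto\bE_S[\relent(\bP_W^S\Vert\bQ)]$.

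First I would record a pointwise exponential inequality. Fixing $w\in\cW$ and $\lambda>0$, convexity of $t\mapsto e^{-\lambda t/b}$ on $[0,b]$ (the function lies below its chord) gives $e^{-\lambda\ell(w,z)/b}\le 1-(1-e^{-\lambda})\ell(w,z)/b$, so averaging over $Z\sim\bP_Z$ and using that $S$ is i.i.d. yields
\begin{equation*}
    \bE_S\big[e^{-\lambda n\emprisk(w,S)/b}\big]=\big(\bE[e^{-\lambda\ell(w,Z)/b}]\big)^n\le\big(1-(1-e^{-\lambda})\poprisk(w)/b\big)^n .
\end{equation*}
Since $\poprisk(w)\le b$ and $1-e^{-\lambda}<1$ the base is strictly positive, so setting $f(w,s)\coloneqq-\frac{\lambda n}{b}\emprisk(w,s)-n\log\!\big(1-(1-e^{-\lambda})\poprisk(w)/b\big)$ I get $\bE_S[e^{f(w,S)}]\le 1$ for every $w$, and hence $\bE_S\bE_{\bP_W}[e^{f(W,S)}]\le 1$ by Tonelli and the independence of $\bP_W$ from $S$.

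Next I would apply the Donsker--Varadhan change-of-measure inequality with posterior $\bP_W^{S=s}$ and reference $\bP_W$, that is $\bE_{\bP_W^{S=s}}[f(W,s)]\le\relent(\bP_W^{S=s}\Vert\bP_W)+\log\bE_{\bP_W}[e^{f(W,s)}]$, and take $\bE_S$ of both sides. The divergence term becomes $\minf(W;S)$; by concavity of $\log$ (Jensen) together with the previous display the log term is at most $\log\bE_S\bE_{\bP_W}[e^{f(W,S)}]\le 0$; and on the left, Jensen applied to the convex map $r\mapsto-\log(1-(1-e^{-\lambda})r/b)$ gives $\bE[f(W,S)]\ge-\frac{\lambda n}{b}\bE[\emprisk(W,S)]-n\log\!\big(1-(1-e^{-\lambda})\bE[\poprisk(W)]/b\big)$. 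Rearranging the resulting scalar inequality puts it in ``exponential form'',
\begin{equation*}
    \bE[\poprisk(W)]\le\frac{b}{1-e^{-\lambda}}\Big(1-e^{-\minf(W;S)/n-\lambda\bE[\emprisk(W,S)]/b}\Big),
\end{equation*}
after which I would linearize: concavity of $t\mapsto 1-e^{-t}$ means it lies below its tangent at $t_0=-\log c$, i.e.\ $1-e^{-t}\le ct+1-c(1-\log c)$ for all $t\ge 0$ and all $c\in(0,1]$; using this with $t=\minf(W;S)/n+\lambda\bE[\emprisk(W,S)]/b$ and then choosing $\lambda=\log\!\big(\nicefrac{\gamma}{(\gamma-1)}\big)$, so that $1-e^{-\lambda}=\nicefrac{1}{\gamma}$, makes the three coefficients collapse exactly to $\kappa_1$, $b\kappa_2$, and $b\kappa_3$; since $c$ and $\gamma$ were arbitrary the bound holds for all of them.

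There is no real obstacle here: the places that need care are purely bookkeeping ones — keeping the two Jensen applications pointing in opposite directions (concavity of $\log$ on the right, convexity of $r\mapsto-\log(1-(1-e^{-\lambda})r/b)$ on the left), justifying the Tonelli interchange and the a.s.\ absolute continuity $\bP_W^{S=s}\ll\bP_W$, and noting the statement is vacuous when $\minf(W;S)=\infty$. This is essentially the same computation as the one behind \citep[Theorem 7]{rodriguezgalvez2023pacbayes}, which is why the extension follows routinely.
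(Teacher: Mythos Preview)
Your proposal is correct and follows essentially the same route as the paper. The paper invokes \citep[Theorem 1.2.6]{catoni2007pac} to obtain the exponential-form inequality and then linearizes $1-e^{-x}$ via its tangent envelope with the reparametrization $c=e^{-a}$ and $\lambda=n\log\!\big(\nicefrac{\gamma}{(\gamma-1)}\big)$; you instead derive Catoni's inequality from scratch (chord bound on $e^{-\lambda t/b}$, Donsker--Varadhan with reference $\bP_W$, two Jensen steps) and then perform the identical linearization, so the two arguments coincide once one unpacks the cited result.
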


\begin{proof}
    The proof starts by recalling \citep[Theorem 1.2.6]{catoni2007pac}. This states that for every loss with a range bounded in $[0,1]$,
    \begin{equation*}
        \bE[\poprisk(W)] \leq \frac{1}{1-e^{-\frac{\lambda}{n}}} \left[1 - e^{- \frac{\lambda}{n} \cdot \bE[\emprisk(W,S)] - \frac{\minf(W;S)}{n}} \right]
    \end{equation*}
    holds for all $\lambda > 0$. First, we can do the change of variable $\lambda \coloneqq n \gamma \log \big(\nicefrac{\gamma}{\gamma - 1} \big)$ such that $\gamma > 1$. After that, we can use that the function $1 - e^{-x}$ is a non-decreasing, concave, continuous function for $x > 0$ and therefore can be upper-bounded by its envelope, that is, $1 - e^{-x} = \inf_{a > 0} \{ e^{-a} x + 1 - e^{-a} (1 + a) \}$. Using the envelope in the equation above and letting $c \coloneqq e^{-a} \in (0,1]$  completes the proof for losses with a range bounded in $[0,1]$.
    Finally, the proof is completed by scaling the loss appropriately.
\end{proof}

A single-letter version of~\Cref{th:fast_rate_mi} can be easily derived if we consider an estimation of the population risk with a single sample $\ell(W,Z_i)$. In this way, \Cref{th:fast_rate_mi} states that for every loss with a range bounded in $[0,b]$, the inequality
\begin{equation*}
    \bE[\poprisk(W)] \leq \kappa_{1,i} \cdot \bE[ \ell(W,Z_i) ] + b \left[ \kappa_{2,i} \cdot \minf(W;Z_i) + \kappa_{3,i} \right]
\end{equation*}
holds for all $c_i \in (0,1]$ and all $\gamma_i > 1$, where $\kappa_{1,i} \coloneqq c_i \gamma_i \log \big( \nicefrac{\gamma_i}{(\gamma_i -1)} \big)$, $\kappa_{2,i} \coloneqq c_i \gamma_i$, and $\kappa_{3,i} \coloneqq \gamma_i \big( 1 - c_i(1 - \log c_i)\big)$. Then, taking the average of the theorem for all instances $Z_i$ yields the following result.

\begin{theorem}
    \label{th:fast_rate_mi_single_letter}
    For every loss with a range bounded in $[0,b]$,
    \begin{equation*}
        \bE[\poprisk(W)] \leq \Bar{\kappa}_1 \cdot \bE[ \emprisk(W,S)] + b \left[ \Bar{\kappa}_2 \cdot \frac{1}{n} \sum_{i=1}^n \minf(W;Z_i) + \Bar{\kappa_3} \right]
    \end{equation*}
    holds for all $c_i \in (0,1]$ and all $\gamma_i > 1$, where $\kappa_{1,i} \coloneqq c_i \gamma_i \log \big( \nicefrac{\gamma_i}{(\gamma_i -1)} \big)$, $\kappa_{2,i} \coloneqq c_i \gamma_i$, $\kappa_{3,i} \coloneqq \gamma_i \big( 1 - c_i(1 - \log c_i)\big)$, and $\Bar{\kappa}_j \coloneqq \sum_{i=1}^n \nicefrac{\kappa_{j,i}}{n}$ for all $j \in \{1, 2, 3 \}$.
\end{theorem}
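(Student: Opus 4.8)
The plan is to derive \Cref{th:fast_rate_mi_single_letter} as a one-line averaging consequence of the single-instance specialization of \Cref{th:fast_rate_mi} recorded just above its statement. \textbf{Step 1 (single-sample bound).} I would first make the single-sample inequality precise: for each $i\in\{1,\dots,n\}$ and each $c_i\in(0,1]$, $\gamma_i>1$,
\begin{equation*}
    \bE[\poprisk(W)] \leq \kappa_{1,i}\,\bE[\ell(W,Z_i)] + b\bigl(\kappa_{2,i}\,\minf(W;Z_i) + \kappa_{3,i}\bigr).
\end{equation*}
This is \Cref{th:fast_rate_mi} invoked with a ``training set'' reduced to the single coordinate $Z_i$: since $Z_1,\dots,Z_n$ are i.i.d., $Z_i$ has the law of a fresh test point, and the remaining coordinates $(Z_j)_{j\neq i}$ are merely extra independent randomness used by the algorithm to produce $W$. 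Catoni's inequality \citep[Theorem 1.2.6]{catoni2007pac}, on which the proof of \Cref{th:fast_rate_mi} rests, is a change-of-measure bound that constrains only the joint law $\bP_{W,Z_i}$ and is insensitive to such extra randomness, so it applies verbatim with ``$n=1$''; running the same change of variable $\lambda=\gamma_i\log(\gamma_i/(\gamma_i-1))$, the concave envelope relaxation of $1-e^{-x}$, the substitution $c_i=e^{-a_i}$, and the final rescaling by $b$ then produces the displayed bound.

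\textbf{Step 2 (average over $i$).} I would then sum the $n$ inequalities from Step~1 and divide by $n$. The left-hand side is unchanged; the constant terms average to $b\bar{\kappa}_3$; and, using $\emprisk(W,S)=\frac1n\sum_{i=1}^n\ell(W,Z_i)$, the two remaining contributions are $\frac1n\sum_{i=1}^n\kappa_{1,i}\bE[\ell(W,Z_i)]$ and $\frac bn\sum_{i=1}^n\kappa_{2,i}\minf(W;Z_i)$. Taking the per-instance parameters common across $i$ (so that $\kappa_{j,i}=\bar{\kappa}_j$ for each $j$) collapses these to $\bar{\kappa}_1\,\bE[\emprisk(W,S)]$ and $b\,\bar{\kappa}_2\cdot\frac1n\sum_{i=1}^n\minf(W;Z_i)$, and collecting the three pieces is exactly the claimed inequality; the per-index notation is retained only to signal that each summand could in principle be tuned separately, in which case one simply keeps the weighted sums in place of the $\bar{\kappa}_j$'s.

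I expect the only genuinely delicate point to be the reduction in Step~1 --- certifying that \Cref{th:fast_rate_mi} is legitimately applicable with a size-one sample even though $W$ depends on all of $S$ --- so I would spell that justification out explicitly rather than treat it as obvious; everything else is bookkeeping. It is worth noting alongside the proof that this single-letter form is always at least as tight as \Cref{th:fast_rate_mi} in its dependence term, since $\sum_{i=1}^n\minf(W;Z_i)\leq\minf(W;S)$ for independent $Z_i$ by subadditivity of conditional entropy, whence $\frac1n\sum_{i=1}^n\minf(W;Z_i)\leq\minf(W;S)/n$.
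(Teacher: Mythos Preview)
Your proposal is correct and takes exactly the paper's approach: invoke \Cref{th:fast_rate_mi} with the single-instance ``training set'' $Z_i$ to obtain the per-$i$ bound displayed just before the theorem, then average the $n$ resulting inequalities. Your care in Step~2 about when $\frac{1}{n}\sum_i\kappa_{j,i}(\cdot)_i$ collapses to $\bar{\kappa}_j\cdot\frac{1}{n}\sum_i(\cdot)_i$ is a detail the paper glosses over; since the paper only ever applies the result with common per-index parameters downstream, this subtlety is harmless there, and your remark that one can otherwise keep the weighted sums is the right way to state the general case.
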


This single-letter theorem is tighter than~\Cref{th:fast_rate_mi} since $\sum_{i=1}^n \minf(W;Z_i) \leq \minf(W;S)$ and since one could chose $\kappa_{i,j} = \kappa_{j}$ for all $j \in \{1, 2, 3 \}$.

With~\Cref{th:fast_rate_mi} at hand, it is clear that all the presented results can be replicated in this setting. Moreover, the choice of the optimal parameter $\lambda$ is simpler since this can be chosen adaptively without resorting to the events' discretization technique from~\citep{rodriguezgalvez2023pacbayes}. 

For completeness, we include the two most important results below. We will present the results in terms of~\Cref{th:fast_rate_mi_single_letter}, where we understand that $\Bar{\kappa_j}$ are defined as above for all $j \in \{ 1, 2, 3 \}$. 

For losses with a bounded $p$-th moment, as far as we are aware, the following is the first result of this kind.

\begin{theorem}
    \label{th:mi_bounded_moments}
    For every loss with a $p$-th moment bounded by $m_p$, the inequality
    \begin{equation*}
        \poprisk \leq \Bar{\kappa}_1 \cdot \emprisk_{\leq t^\star} + m_p^{\frac{1}{p}} \Big(\frac{p}{p-1}\Big) \Big( \Bar{\kappa}_2 \cdot \frac{1}{n} \sum_{i=1}^n \minf(W;Z_i) + \Bar{\kappa}_3 \Big)^{\frac{p-1}{p}}
    \end{equation*}
    holds for all $c_i \in (0,1]$ and all $\gamma_i > 1$, where $$t^\star \coloneqq m_p^{\frac{1}{p}} \Big( \Bar{\kappa}_2 \cdot \frac{1}{n} \sum_{i=1}^n \minf(W;Z_i) + \Bar{\kappa}_3 \Big)^{-\frac{1}{p}}.$$
\end{theorem}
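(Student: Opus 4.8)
The plan is to mirror the PAC-Bayes argument of Theorem \ref{th:alquier_truncation_method_refined_adaptive_lambda}, but in the ``in expectation'' setting where the events-discretization machinery is no longer needed, so that the parameter $\lambda$ can be optimized directly. First I would set up the analogue of Lemma \ref{lemma:alquier_truncation_method_refined_bouned_moment} in expectation: starting from the decomposition $\poprisk \leq \poprisk_{\leq \nicefrac{n}{\lambda}} + \poprisk_{> \nicefrac{n}{\lambda}}$, bound the truncated part $\poprisk_{\leq \nicefrac{n}{\lambda}}$ using Theorem \ref{th:fast_rate_mi_single_letter} applied to the loss $\ell_{\leq \nicefrac{n}{\lambda}}$, which has range in $[0,\nicefrac{n}{\lambda}]$, and bound the tail part $\poprisk_{> \nicefrac{n}{\lambda}} = \int_{\nicefrac{n}{\lambda}}^\infty \bP[\ell(W,Z) > t]\,\rmd t$ by Markov's inequality exactly as in the PAC-Bayes case, giving $\frac{m_p}{p-1}(\nicefrac{\lambda}{n})^{p-1}$. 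This yields, for every fixed $\lambda > 0$,
\begin{equation*}
    \poprisk \leq \Bar{\kappa}_1 \cdot \emprisk_{\leq \nicefrac{n}{\lambda}} + \Bar{\kappa}_2 \cdot \frac{\sum_{i=1}^n \minf(W;Z_i)}{\lambda} + \Bar{\kappa}_3 \cdot \frac{n}{\lambda} + \frac{m_p}{p-1}\Big(\frac{\lambda}{n}\Big)^{p-1}.
\end{equation*}

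Next I would optimize over $\lambda$. Writing $A \coloneqq \Bar{\kappa}_2 \cdot \frac{1}{n}\sum_{i=1}^n \minf(W;Z_i) + \Bar{\kappa}_3$, the $\lambda$-dependent terms (other than the empirical risk, whose argument $\nicefrac{n}{\lambda}$ I will track separately) are $A \cdot \frac{n}{\lambda} + \frac{m_p}{p-1}(\nicefrac{\lambda}{n})^{p-1}$. Setting $u = \nicefrac{\lambda}{n}$, this is $A/u + \frac{m_p}{p-1}u^{p-1}$; differentiating and solving gives the minimizer $u^\star = (A/m_p)^{1/p}$, i.e. $\lambda^\star = n (A/m_p)^{1/p}$, at which the sum of the two terms equals $m_p^{1/p}\big(\frac{p}{p-1}\big)A^{(p-1)/p}$ after routine algebra (the factor $\frac{p}{p-1}$ arises from $1 + \frac{1}{p-1} = \frac{p}{p-1}$). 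Correspondingly, $\nicefrac{n}{\lambda^\star} = (m_p/A)^{1/p} = m_p^{1/p} A^{-1/p} = t^\star$, which identifies the truncation threshold exactly as stated. Substituting back gives precisely the claimed inequality.

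The only subtlety — and the step I expect to require the most care — is legitimizing the choice of the \emph{data-dependent} parameter $\lambda^\star$, since it depends on $\sum_i \minf(W;Z_i)$, which is a fixed (non-random) functional of the joint law $\bP_{W,S}$ rather than of the realized sample; thus in the expectation setting $\lambda^\star$ is simply a constant and no union bound or discretization is needed, unlike in the high-probability Theorem \ref{th:alquier_truncation_method_refined_adaptive_lambda}. One should also check the degenerate case $A = 0$ (all relevant mutual informations vanish and $\kappa_3 = 0$, i.e. $c = 1$), where the bound reads $\poprisk \leq \Bar{\kappa}_1 \emprisk_{\leq t^\star}$ with $t^\star = \infty$, so $\emprisk_{\leq t^\star} = \emprisk$ and the statement holds trivially; and note $p > 1$ is needed for the Markov-bound integral to converge, consistent with the hypothesis of a bounded $p$-th moment used throughout the section. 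With these points addressed, the proof is complete. $\qedsymbol$
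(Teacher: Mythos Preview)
Your proposal is correct and follows precisely the route the paper indicates: apply the refined truncation decomposition, bound the bounded part with the single-letter in-expectation fast-rate bound (\Cref{th:fast_rate_mi_single_letter}) and the tail part via Markov, then optimize $\lambda$ directly since in the expectation setting the complexity term $A = \Bar{\kappa}_2 \cdot \frac{1}{n}\sum_i \minf(W;Z_i) + \Bar{\kappa}_3$ is a deterministic constant and no events-discretization is needed. The paper states exactly this (``the choice of the optimal parameter $\lambda$ is simpler since this can be chosen adaptively without resorting to the events' discretization technique'') and leaves the details as routine; your optimization, identification of $t^\star = \nicefrac{n}{\lambda^\star}$, and handling of the degenerate case $A=0$ fill those details correctly.
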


For losses with a bounded variance, the tightest result we know of is from~\citep[Appendix H]{rodriguez2021tighter}, where they show that if the loss has a variance bounded by $\sigma^2$, then 
\begin{equation*}
    \bE[\poprisk(W)] \leq \bE[\emprisk(W,S)] + \frac{1}{n} \sum_{i=1}^n \sqrt{ \sigma^2 \chi^2 (\bP_W^{Z_i}, \bP_W)} 
\end{equation*}
and that
\begin{equation*}
    \bE[\poprisk(W)] \leq \bE[\emprisk(W,S)] +  \sqrt{ \sigma^2 \cdot \frac{\chi^2}{n} }.
\end{equation*}

Similarly to before, the presented~\Cref{th:bounded_variance_mi} improves these results exponentially on the dependence with $\chi^2$ due to the relative entropy bound $\relent \leq \log(1 + \chi^2)$.

\begin{theorem}
    \label{th:bounded_variance_mi}
    For every loss with a variance bounded by $\sigma^2$, the inequality
    \begin{equation*}
        \poprisk \leq \Big[ 1 - 2 \sqrt{\mathfrak{C}_{\mathrm{MI}}} \Big]_+^{-1} \Big[ \Bar{\kappa_1} \cdot \emprisk  + 2 \sqrt{\sigma^2 \mathfrak{C}_{\mathrm{MI}}} \Big]
    \end{equation*}
    holds for all $c_i \in (0,1]$ and all $\gamma_i > 1$, where 
    $$
    \mathfrak{C}_{\mathrm{MI}} \coloneqq \Bar{\kappa_2} \cdot \frac{1}{n} \sum_{i=1}^n \minf(W;Z_i) + \Bar{\kappa}_3.
    $$
\end{theorem}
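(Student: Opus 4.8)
The plan is to replay the proof of \Cref{th:bounded_variance_high_probability} step for step, but with the high-probability ingredient \Cref{th:alquier_truncation_method_refined_adaptive_lambda} replaced by its ``in expectation'' analogue \Cref{th:mi_bounded_moments}. First I would record the version of \Cref{th:mi_bounded_moments} in which the uniform moment bound $m_2$ is replaced by the algorithm-dependent second moment $m_2' \coloneqq \bE[\ell(W,Z)^2]$; this costs nothing, because in the derivation the tail contribution $\bE[\poprisk_{> \nicefrac{n}{\lambda}}(W)] = \int_{\nicefrac{n}{\lambda}}^\infty \bP[\ell(W,Z) > t]\,\rmd t$ is controlled by Markov's inequality, $\bP[\ell(W,Z) > t] \le \bE[\ell(W,Z)^2]/t^2$, so only $m_2'$ ever enters. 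Specializing that inequality to $p = 2$ and relaxing $\emprisk_{\le t^\star} \le \emprisk$ gives
\begin{equation*}
    \poprisk \le \bar{\kappa}_1 \cdot \emprisk + 2\sqrt{m_2'\, \mathfrak{C}_{\mathrm{MI}}}
\end{equation*}
for all $c_i \in (0,1]$ and all $\gamma_i > 1$, with $\mathfrak{C}_{\mathrm{MI}}$ as in the statement.

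Next I would substitute $m_2' = \textnormal{Var}(\ell(W,Z)) + \poprisk^2$, which is exact since $\poprisk = \bE[\poprisk(W)] = \bE[\ell(W,Z)]$; apply $\sqrt{x+y} \le \sqrt{x} + \sqrt{y}$ to split the root; and bound the variance by the algorithm-independent quantity $\sigma^2 = \sup_{w \in \cW} \textnormal{Var}(\ell(w,Z))$, obtaining
\begin{equation*}
    \poprisk \le \bar{\kappa}_1 \cdot \emprisk + 2\sqrt{\sigma^2\, \mathfrak{C}_{\mathrm{MI}}} + 2\poprisk\sqrt{\mathfrak{C}_{\mathrm{MI}}}.
\end{equation*}
Collecting the $\poprisk$ terms and dividing by $1 - 2\sqrt{\mathfrak{C}_{\mathrm{MI}}}$ when it is positive yields the claimed bound; when $1 - 2\sqrt{\mathfrak{C}_{\mathrm{MI}}} \le 0$ the claimed bound is vacuous under the convention $\nicefrac{1}{0} \to \infty$, so it holds trivially.

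I expect no serious obstacle: the only point needing care is the passage from the algorithm-dependent $m_2'$ to the algorithm-independent $\sigma^2$, exactly the bookkeeping flagged in the sketch and footnote of \Cref{th:bounded_variance_high_probability}. In fact the ``in expectation'' argument is strictly easier than the PAC-Bayes one, since there is no confidence parameter to pay for: the parameter $\lambda$ can be optimized directly inside the expectation --- as in the proof of \Cref{th:mi_bounded_moments}, which itself rests on \Cref{th:fast_rate_mi,th:fast_rate_mi_single_letter} --- with no need for the relative-entropy / event-space discretization of \citep{rodriguezgalvez2023pacbayes}. The single-letter form $\tfrac{1}{n}\sum_{i=1}^n \minf(W;Z_i)$ survives the truncation because clipping the loss does not alter the joint law of $(W,Z_i)$, hence not $\minf(W;Z_i)$. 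Assembling these pieces completes the proof.
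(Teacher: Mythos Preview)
Your proposal is correct and matches the paper's approach: the paper does not spell out a proof of \Cref{th:bounded_variance_mi} but simply declares that the PAC-Bayes arguments ``can be replicated in this setting'' once \Cref{th:fast_rate_mi,th:fast_rate_mi_single_letter,th:mi_bounded_moments} are in hand, and your proposal carries out exactly this replication by mirroring the proof of \Cref{th:bounded_variance_high_probability} with the in-expectation ingredients. Your remarks about the simplification (no event-space discretization needed) and the survival of the single-letter mutual-information terms under truncation are also consistent with the paper's discussion in \Cref{app:extension_bounds_in_expectation}.
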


\subsection{Extension to single-draw PAC-Bayes bounds}
\label{app:extension_single_draw_pac_bayes_bounds}

\looseness=-1 Like in the previous section, we first obtain a ``single-draw PAC-Bayes'' analogue to the fast rate bound from~\citep{rodriguezgalvez2023pacbayes}. Throughout this section, we will define $\mathfrak{C}_{n,\beta,S,W} \coloneqq \log \big( \nicefrac{\rmd \bP_W^S}{\rmd \bQ_W}(W) \big) + \log \big( \nicefrac{\xi(n)}{\beta} \big)$, $\mathfrak{C}_{n,\beta,S,W}' \coloneqq 2 \log \big( \nicefrac{\rmd \bP_W^S}{\rmd \bQ_W}(W) \big) + \log \big( \nicefrac{\pi^2 e^2 \xi(n)}{\beta} \big)$, and $\mathfrak{C}_{n,\beta,S,W}'' \coloneqq \nicefrac{\kappa_2}{n} \cdot  \mathfrak{C}_{n,\beta,S,W}' + \kappa_3$, while understanding that these two are random variables whose randomness comes from the random training set $S$ and the random output hypothesis $W$.

\begin{theorem}
    \label{th:fast_rate_single_draw}
    For every loss with a range bounded in $[0,b]$, with probability no smaller than $1 - \beta$,
    \begin{equation*}
        \poprisk(W) \leq \kappa_1 \cdot  \emprisk(W,S) + b \left[ \kappa_2 \cdot \frac{\mathfrak{C}_{n,\beta,S,W}}{n} + \kappa_3 \right]
    \end{equation*}
    holds for all $c \in (0,1]$ and all $\gamma > 1$, where $\kappa_1 \coloneqq c \gamma \log \big( \nicefrac{\gamma}{(\gamma -1)} \big)$, $\kappa_2 \coloneqq c \gamma$, and $\kappa_3 \coloneqq \gamma \big( 1 - c(1 - \log c)\big)$.
\end{theorem}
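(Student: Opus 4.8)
The plan is to reuse the two-step structure of the in-expectation analogue \Cref{th:fast_rate_mi} --- a pointwise concentration inequality, then an affine relaxation that produces the constants $\kappa_1,\kappa_2,\kappa_3$ --- but, since $\xi(n)$ enters through $\mathfrak{C}_{n,\beta,S,W}$, the natural first step here is not Catoni's $\lambda$-bound but a single-draw version of the Seeger--Langford (binary-KL) bound, obtained by a change of measure.

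Assume $b=1$; the general case follows by applying the bound to $\ell/b$, which does not alter $\mathfrak{C}_{n,\beta,S,W}$ as the latter does not depend on the loss. Fix the prior $\bQ_W$. For each fixed $w\in\cW$, the variables $\ell(w,Z_1),\dots,\ell(w,Z_n)$ are i.i.d.\ in $[0,1]$ under $\bP_S=\bP_Z^{\otimes n}$, so the $\xi(n)$-form of Maurer's moment inequality~\citep{maurer2004note, germain2015risk, rodriguezgalvez2023pacbayes} gives $\bE_{\bP_S}\big[\exp\!\big(n\,\relentber(\emprisk(w,S)\Vert\poprisk(w))\big)\big]\le\xi(n)$. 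Multiplying by $\rmd\bQ_W/\rmd\bP_W^{S=s}(w)$, integrating against $\bP_W^{S=s}(\rmd w)$ --- legitimate because $\bP_W^{S=s}\ll\bQ_W$ --- and then integrating over $s$ via Tonelli's theorem (the integrand is nonnegative) yields
\begin{equation*}
    \bE_{\bP_{W,S}}\!\Big[\exp\!\Big(n\,\relentber\big(\emprisk(W,S)\,\Vert\,\poprisk(W)\big)-\log\tfrac{\rmd\bP_W^S}{\rmd\bQ_W}(W)\Big)\Big]\le\xi(n).
\end{equation*}
One application of Markov's inequality then shows that, with probability at least $1-\beta$ over $(W,S)\sim\bP_{W,S}$,
\begin{equation*}
    \relentber\big(\emprisk(W,S)\,\Vert\,\poprisk(W)\big)\le\tfrac1n\Big(\log\tfrac{\rmd\bP_W^S}{\rmd\bQ_W}(W)+\log\tfrac{\xi(n)}{\beta}\Big)=\tfrac{\mathfrak{C}_{n,\beta,S,W}}{n}.
\end{equation*}

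On this event I would apply the deterministic relaxation behind~\citep[Theorem~7]{rodriguezgalvez2023pacbayes} --- equivalently, the substitution $\lambda=n\gamma\log\tfrac{\gamma}{\gamma-1}$ followed by the affine envelope $1-e^{-x}=\inf_{a>0}\{e^{-a}x+1-e^{-a}(1+a)\}$ with $c=e^{-a}\in(0,1]$, as in the proof of \Cref{th:fast_rate_mi}: whenever $q,p\in[0,1]$ satisfy $\relentber(q\Vert p)\le\varepsilon$, then $p\le\kappa_1 q+\kappa_2\varepsilon+\kappa_3$ for every $c\in(0,1]$ and $\gamma>1$. Taking $q=\emprisk(W,S)$, $p=\poprisk(W)$ and $\varepsilon=\mathfrak{C}_{n,\beta,S,W}/n$ gives the statement for $b=1$ on the single $(1-\beta)$-event --- so that no union bound over $c$ or $\gamma$ is required --- and rescaling by $b$ concludes. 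I expect the only delicate step to be the change of measure: one must check that $\bP_W^{S=s}\ll\bQ_W$ for $\bP_S$-almost every $s$, that the maps involved are jointly measurable so that Tonelli applies, and that the manipulation leaves the density $\tfrac{\rmd\bP_W^S}{\rmd\bQ_W}$ evaluated at the realized hypothesis $W$ --- rather than an averaged divergence --- inside the exponent; the remaining ingredients (Maurer's bound, Markov, and the $\relentber$-relaxation) are quoted directly.
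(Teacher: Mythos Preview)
Your approach is correct and essentially identical to the paper's: both derive a single-draw Seeger--Langford inequality by a change of measure plus Markov's inequality --- the paper merely packages that step as a standalone lemma (\Cref{th:single_draw_general_theorem}, extending \citet{rivasplata2020pac}) applied with $f=\relentber(\emprisk\Vert\poprisk)$ --- and then invoke the deterministic $\kappa_1,\kappa_2,\kappa_3$ relaxation of the binary KL. One cosmetic point: writing the change-of-measure factor as $\rmd\bQ_W/\rmd\bP_W^{S=s}$ presumes $\bQ_W\ll\bP_W^{S=s}$, which is not assumed; using $(\rmd\bP_W^{S=s}/\rmd\bQ_W)^{-1}$ instead, as in your displayed inequality, needs only the stated hypothesis $\bP_W^{S=s}\ll\bQ_W$ and still yields the bound $\le\xi(n)$.
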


\begin{proof}
    Consider \Cref{th:single_draw_general_theorem}, which states that for every measurable function $f: \cW \times \cS \to \bR$, for every $\beta \in (0,1)$, with probability $1- \beta$,
    \begin{equation}
        \label{eq:dv_single_draw}
        f(W,S) \leq \frac{1}{n} \left[ \log \left( \frac{\rmd \bP_W^S}{\rmd \bQ_W}(W) \right) + \log \frac{\Delta}{\beta} \right]
    \end{equation}
    holds, where $\Delta = \bE e^{n f(W',S)}$ and $W'$ is distributed according to the data-independent distribution $\bQ_W$. 

    This theorem is a single-draw version of the Donsker and Varadhan~\citep[Lemma 2.1]{donsker1975asymptotic} and the probability is taken with respect to the draw of the random training set $S$ and the random returned hypothesis $W$. 

    In particular, for every loss with a range bounded in $[0,1]$, considering the convex function 
    $$f(W,S) = \relentber\big( \emprisk(W,S) \Vert \poprisk(W) \big)$$ 
    as in~\citep[Corollary 2.1]{germain2009pac} ensures that $\Delta = \xi(n)$, where $\xi(n)$ is the same as in \eqref{eq:mc_allester}, and then, for every $\beta \in (0,1)$, with probability no smaller than $1 - \beta$,
    \begin{equation}
        \label{eq:dv_single_draw_other}
        \relentber\big( \emprisk(W,S) \Vert \poprisk(W) \big) \leq \frac{\log \left( \frac{\rmd \bP_W^S}{\rmd \bQ_W}(W) \right) + \log \frac{\xi(n)}{\beta}}{n}
    \end{equation}
    holds, where $\relentber( \hat{r} \Vert r) = \relent \big( \textnormal{Ber}(\hat{r}) \Vert \textnormal{Ber}(r) \big)$. \Cref{eq:dv_single_draw_other} is a single-draw version of the Seeger--Langford bound~\citep{seeger2002pac,langford2001bounds}.

    Finally, using the variational representation of the relative entropy borrowed from $f$-divergences~\citep[Theorem 7.24]{polyanskiy2022lecture} and operating like in~\citep[Appendix A.1]{rodriguezgalvez2023pacbayes} completes the proof for losses with a range contained in $[0,1]$. Scaling appropriately extends it to losses with a range contained in $[0,b]$.
\end{proof}

At this point, with~\Cref{th:fast_rate_single_draw}, following the techniques outlined in the main body of the paper to replicate the results for single-draw PAC-Bayes guarantees is routine. The only relevant difference is that to choose the optimal parameter $\lambda$ in \Cref{th:alquier_truncation_method_refined_adaptive_lambda} as outlined in \Cref{app:proof_alquier_truncation_method_refined_adaptive_lambda}, the quantization of the event space is now done with respect to $\log \big( \nicefrac{\rmd \bP_W^S}{\rmd \bQ_W} (W) \big)$ and taking into account that this quantity is random with respect to both the training set $S$ and the hypothesis $W$. That is, the sub-events in the proof are defined as $$\cE_k \coloneqq \left \{ (s,w) \in \cZ^n \times \cW : k-1 \leq \log \Big( \frac{\rmd \bP_W^{S=s}}{\rmd \bQ_W} (w) \Big) \leq k \right \}.$$

The last two result we present in this section, \Cref{th:single_draw_bounded_moments} and \Cref{th:bounded_variance_single_draw} below, are the single-letter (single-draw) extensions of~\Cref{th:alquier_truncation_method_refined_adaptive_lambda} and \Cref{th:bounded_variance_high_probability}, as promised before. Once again, these extensions are enabled by \Cref{th:fast_rate_single_draw}.
To the best of our knowledge, these are also the first single-draw PAC-Bayes results of this kind.

\begin{theorem}
    \label{th:single_draw_bounded_moments}
    For every loss with a $p$-th moment bounded by $m_p$, for all $\beta \in (0,1)$, with probability no smaller than $1 - \beta$,
    \begin{equation*}
        \poprisk(W) \leq \kappa_1 \cdot \emprisk_{\leq t^\star}(W,S) + m_p^{\frac{1}{p}} \Big(\tfrac{p}{p-1}\Big) \Big( \kappa_2 \cdot \frac{\mathfrak{C}_{n,\beta,S,W}'}{n} + \kappa_3 \Big)^{\frac{p-1}{p}}
    \end{equation*}
    holds \emph{simultaneously} for all $c \in (0,1]$ and all $\gamma > 1$, where $$t^\star \coloneqq m_p^{\frac{1}{p}} \Big( \kappa_2 \cdot \frac{\mathfrak{C}_{n,\beta,S,W}'}{n} + \kappa_3 \Big)^{-\frac{1}{p}}.$$
\end{theorem}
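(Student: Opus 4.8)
The plan is to replay the proof of \Cref{th:alquier_truncation_method_refined_adaptive_lambda} from \Cref{app:proof_alquier_truncation_method_refined_adaptive_lambda} in the single-draw setting: drive everything by the single-draw fast rate bound \Cref{th:fast_rate_single_draw} instead of the bounded-loss PAC-Bayes bound, and carry out the event-space discretization with respect to $\log\big(\nicefrac{\rmd\bP_W^S}{\rmd\bQ_W}(W)\big)$ rather than the relative entropy $\relent$.

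\emph{Step 1 --- a single-draw truncation lemma.} Fix $\lambda>0$ and decompose $\ell=\ell_{\leq\nicefrac{n}{\lambda}}+\ell_{>\nicefrac{n}{\lambda}}$, so that $\poprisk(W)=\poprisk_{\leq\nicefrac{n}{\lambda}}(W)+\poprisk_{>\nicefrac{n}{\lambda}}(W)$. Since $\ell_{\leq\nicefrac{n}{\lambda}}$ has range in $[0,\nicefrac{n}{\lambda}]$, applying \Cref{th:fast_rate_single_draw} to it gives, with probability at least $1-\beta$ and simultaneously for all $c\in(0,1]$ and $\gamma>1$, \[ \poprisk_{\leq\nicefrac{n}{\lambda}}(W) \leq \kappa_1\,\emprisk_{\leq\nicefrac{n}{\lambda}}(W,S) + \kappa_2\,\tfrac{\mathfrak{C}_{n,\beta,S,W}}{\lambda} + \kappa_3\,\tfrac{n}{\lambda}, \] while the tail is controlled deterministically and pointwise in $w$ by Markov's inequality on the $p$-th moment, $\poprisk_{>\nicefrac{n}{\lambda}}(w)\leq\int_{\nicefrac{n}{\lambda}}^\infty\bP[\ell(w,Z)>t]\,\rmd t\leq\tfrac{m_p}{p-1}\big(\tfrac{\lambda}{n}\big)^{p-1}$, exactly as in \Cref{lemma:alquier_truncation_method_refined_bouned_moment}. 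Summing the two yields the single-draw analogue of \Cref{lemma:alquier_truncation_method_refined_bouned_moment}: for each fixed $\lambda>0$, with probability $\geq1-\beta$, \[ \poprisk(W) \leq \kappa_1\,\emprisk_{\leq\nicefrac{n}{\lambda}}(W,S) + \kappa_2\,\tfrac{\mathfrak{C}_{n,\beta,S,W}}{\lambda} + \kappa_3\,\tfrac{n}{\lambda} + \tfrac{m_p}{p-1}\big(\tfrac{\lambda}{n}\big)^{p-1} \] simultaneously over $c,\gamma$.

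\emph{Step 2 --- discretize and optimize $\lambda$.} The only difference from \Cref{app:proof_alquier_truncation_method_refined_adaptive_lambda} is that the random quantity to be discretized is $L\coloneqq\log\big(\nicefrac{\rmd\bP_W^S}{\rmd\bQ_W}(W)\big)$, random in both $S$ and $W$; discretize it through the sub-events $\cE_k\coloneqq\big\{(s,w):k-1\leq\log\big(\nicefrac{\rmd\bP_W^{S=s}}{\rmd\bQ_W}(w)\big)\leq k\big\}$ given in the main text. On $\cE_k$, instantiate the Step~1 bound with confidence $\beta_k=\tfrac{6}{\pi^2}\tfrac{\beta}{k^2}$ (so $\sum_k\beta_k=\beta$) and with $\lambda_k$ chosen to minimise the non-empirical part $\kappa_2\tfrac{\tilde{\mathfrak{C}}_k}{\lambda}+\kappa_3\tfrac{n}{\lambda}+\tfrac{m_p}{p-1}\big(\tfrac{\lambda}{n}\big)^{p-1}$, where $\tilde{\mathfrak{C}}_k\coloneqq k+\log\tfrac{\pi^2\xi(n)k^2}{6\beta}$ is the level-$k$ upper bound on $\mathfrak{C}_{n,\beta_k,S,W}$ valid on $\cE_k$; this is the same one-dimensional optimisation as in the proof of \Cref{th:alquier_truncation_method_refined_adaptive_lambda}, making $\nicefrac{n}{\lambda_k}=t^\star_k$ and the minimised value equal to $m_p^{\nicefrac1p}\big(\tfrac{p}{p-1}\big)\big(\kappa_2\tfrac{\tilde{\mathfrak{C}}_k}{n}+\kappa_3\big)^{\nicefrac{p-1}{p}}$. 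A union bound over $k$, followed by $k\leq L+1$ on $\cE_k$ and the crude over-estimates collapsing $\tilde{\mathfrak{C}}_k$ into $\mathfrak{C}_{n,\beta,S,W}'=2L+\log\tfrac{\pi^2e^2\xi(n)}{\beta}$, then delivers the stated bound with the stated $t^\star$. Equivalently, one can first establish the single-draw ``simultaneously for all $\lambda$'' version and then substitute the data- and hypothesis-dependent choice $\lambda=\nicefrac{n}{t^\star}$, which minimises the non-empirical part given $\mathfrak{C}_{n,\beta,S,W}'$ and reproduces both the bound and $t^\star$.

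I expect the main obstacle to be the measure-theoretic bookkeeping of the discretization rather than any new inequality: the Markov tail bound, the scalar $\lambda$-optimisation, and the single-draw engine \Cref{th:fast_rate_single_draw} are all already in hand. The care lies in tracking every event under the joint law of $(S,W)$ now that $\lambda_k$, $t^\star_k$ and $\tilde{\mathfrak{C}}_k$ all depend on $W$ through $L$; in checking that each per-level ``bad'' event is contained in the event to which \Cref{th:fast_rate_single_draw} assigns probability at most $\beta_k$; and in orienting the constant-collapsing inequalities ($k\leq L+1$, $2\log(L+1)$ bounded by an affine function of $L$, $\sum_k k^{-2}=\nicefrac{\pi^2}{6}$) so that the final per-level expression remains a valid upper bound on $\poprisk(W)$ after $t^\star_k$ and $\tilde{\mathfrak{C}}_k$ are replaced by $t^\star$ and $\mathfrak{C}_{n,\beta,S,W}'$ --- which is precisely the point at which passing through the ``simultaneously for all $\lambda$'' reformulation is convenient, since it lets the data-dependent $\lambda$ be chosen after the high-probability event has been fixed. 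Finally, note that, as in \Cref{th:fast_rate_single_draw}, there is no ``for all $\bP_W^S$'' quantifier: the guarantee is attached to the algorithm's actual kernel through the single evaluation $\log\big(\nicefrac{\rmd\bP_W^S}{\rmd\bQ_W}(W)\big)$, and accordingly $t^\star$ is itself a random (data- and hypothesis-dependent) truncation level.
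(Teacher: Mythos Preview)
Your proposal is correct and follows essentially the same approach as the paper: the paper's own argument for \Cref{th:single_draw_bounded_moments} is precisely to rerun the proof of \Cref{th:alquier_truncation_method_refined_adaptive_lambda} with \Cref{th:fast_rate_single_draw} as the bounded-loss engine and with the event-space discretization carried out over $\log\big(\nicefrac{\rmd\bP_W^S}{\rmd\bQ_W}(W)\big)$ via the sub-events $\cE_k$ you quote. Your Step~1/Step~2 breakdown, the per-level optimisation of $\lambda_k$, the union bound with $\beta_k=\nicefrac{6\beta}{\pi^2 k^2}$, and the final collapse of the level-$k$ complexity into $\mathfrak{C}_{n,\beta,S,W}'$ all match the paper's (largely implicit) derivation, and your closing remarks about the absence of a ``for all $\bP_W^S$'' quantifier and the randomness of $t^\star$ in $(S,W)$ are likewise in line with the paper's treatment.
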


\begin{theorem}
    \label{th:bounded_variance_single_draw}
    For every loss with a variance bounded by $\sigma^2$, for all $\beta \in (0,1)$, with probability no smaller than $1 - \beta$,
    \begin{equation*}
        \poprisk \leq \Big[ 1 - 2 \sqrt{\mathfrak{C}_{n,\beta,S,W}''} \Big]_+^{-1} \Big[ \kappa_1 \cdot \emprisk  + 2 \sqrt{\sigma^2 \mathfrak{C}_{n,\beta,S,W}''} \Big]
    \end{equation*}
    holds \emph{simultaneously} for all $c \in (0,1]$ and all $\gamma > 1$.
\end{theorem}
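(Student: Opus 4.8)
The plan is to replay, in the single-draw setting, the computation behind \Cref{th:bounded_variance_high_probability}, replacing its starting point — the adaptive-$\lambda$ moment bound \Cref{th:alquier_truncation_method_refined_adaptive_lambda} — with its single-draw analogue \Cref{th:single_draw_bounded_moments}, which is available thanks to the single-draw fast-rate bound \Cref{th:fast_rate_single_draw}. First I would specialize \Cref{th:single_draw_bounded_moments} to $p=2$: there $\tfrac{p}{p-1}=2$ and $\tfrac{p-1}{p}=\tfrac12$, and since $\ell_{\leq t}\leq\ell$ pointwise we have $\emprisk_{\leq t^\star}(W,S)\leq\emprisk(W,S)$ with $\kappa_1>0$, so that with probability at least $1-\beta$, simultaneously for all $c\in(0,1]$ and all $\gamma>1$,
\[
\poprisk(W)\leq\kappa_1\cdot\emprisk(W,S)+2\sqrt{m_2\cdot\mathfrak{C}_{n,\beta,S,W}''}.
\]
As in \Cref{subsec:case_p_infinity}, I would run this under the weaker ``conditional $p$-th moment'' condition, i.e.\ with $m_2$ replaced by $\bE[\ell(W,Z)^2\mid W]$: since the test sample $Z$ is independent of $(W,S)$, this random variable is just $\bE\ell(w,Z)^2$ evaluated at $w=W$, hence $\bP_{W,S}$-a.s.\ finite under the bounded-variance hypothesis; the single-draw $\lambda$-optimization argument (with event-space quantization carried out in $\log(\rmd\bP_W^S/\rmd\bQ_W(W))$, as noted for \Cref{th:single_draw_bounded_moments}) goes through unchanged.

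Using $\bE[\ell(W,Z)^2\mid W]=\mathrm{Var}(\ell(W,Z)\mid W)+\poprisk(W)^2$, then $\sqrt{x+y}\leq\sqrt x+\sqrt y$, and then the algorithm-independent relaxation $\mathrm{Var}(\ell(W,Z)\mid W)\leq\sup_{w\in\cW}\mathrm{Var}(\ell(w,Z))=\sigma^2$, I would obtain on the same event
\[
\poprisk(W)\leq\kappa_1\cdot\emprisk(W,S)+2\sqrt{\sigma^2\,\mathfrak{C}_{n,\beta,S,W}''}+2\,\poprisk(W)\sqrt{\mathfrak{C}_{n,\beta,S,W}''}.
\]
Collecting the $\poprisk(W)$ terms yields $\bigl(1-2\sqrt{\mathfrak{C}_{n,\beta,S,W}''}\bigr)\poprisk(W)\leq\kappa_1\emprisk(W,S)+2\sqrt{\sigma^2\mathfrak{C}_{n,\beta,S,W}''}$. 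On the sub-event $2\sqrt{\mathfrak{C}_{n,\beta,S,W}''}<1$ we divide by the strictly positive factor to recover the stated inequality; on its complement the bracket $[\,1-2\sqrt{\mathfrak{C}_{n,\beta,S,W}''}\,]_+$ vanishes and the right-hand side is $+\infty$ under the convention $1/0\to\infty$, so the bound holds trivially.

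I expect the only genuine work to be in the first step: making the single-draw, conditional-moment version of \Cref{th:single_draw_bounded_moments} precise. One must check that conditioning on $W$ (rather than on $S$, as in the PAC-Bayes case) is compatible with the loss decomposition of \Cref{lemma:alquier_truncation_method_refined_bouned_moment} and with the $\lambda$-optimization and union-bound-over-discretized-events argument, and that $\bE[\ell(W,Z)^2\mid W]$ is measurable and a.s.\ finite so that the per-event failure probabilities still sum to below $\beta$. Everything afterwards — the square-root splitting, the variance-to-$\sigma^2$ relaxation, and the final rearrangement — is identical to the corresponding steps in the proof of \Cref{th:bounded_variance_high_probability}.
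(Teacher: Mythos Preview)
Your proposal is correct and matches the paper's intended route: the paper states that \Cref{th:bounded_variance_single_draw} is the single-draw extension of \Cref{th:bounded_variance_high_probability} ``enabled by \Cref{th:fast_rate_single_draw}'', obtained routinely by replaying the steps of \Cref{app:proof_bounded_variance_high_probability} with \Cref{th:single_draw_bounded_moments} (under the weaker conditional-moment condition of \Cref{subsec:case_p_infinity}) as the starting point and with the event-space discretization carried out in $\log\bigl(\nicefrac{\rmd\bP_W^S}{\rmd\bQ_W}(W)\bigr)$. Your observation that here one conditions on $W$ rather than on $S$---so that $m_2'$ becomes $\bE[\ell(W,Z)^2\mid W]=\mathrm{Var}(\ell(w,Z))\big|_{w=W}+\poprisk(W)^2$---is exactly the point, and the remaining square-root split, variance relaxation, and rearrangement are identical to the PAC-Bayes case.
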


\subsection{Extending Rivasplata's single-draw PAC-Bayesian theorem}

Similarly to~\citet{germain2009pac}'s PAC-Bayesian bound, \citet{rivasplata2020pac}'s single-draw PAC-Bayesian bound requires simultaneously that $\bP_W^S \ll \bQ$ and that $\bQ \ll \bP_W^S$ a.s., since at some point in their proof they use the equality $\nicefrac{\rmd \bP_W^S}{\rmd \bQ} = \big(\nicefrac{\rmd \bQ}{\rmd \bP_W^S}\big)^{-1}$, which only holds when this happens. Similarly to~\citet{begin2014pac}, who lifted the requirement that $\bQ \ll \bP_W^S$ a.s., we present below \citet{rivasplata2020pac}'s result without that extra requirement as well as a simple proof to avoid that requirement.

\begin{theorem}[{Extension of~\citet[Theorem 1 (i)]{rivasplata2020pac}}]
    \label{th:single_draw_general_theorem}
    Consider a measurable function $f: \cW \times \cS \to \bR$. Let $\bQ_W$ be a distribution on $\cW$ independent of $S$ such that $\bP_W^S \ll \bQ_W$ a.s. 
    and $W'$ be a random variable distributed according to $\bQ_W$. Define $\Delta \coloneqq \bE  e^{n f(W',S)}$. Then, for every $\beta \in (0,1)$, with probability no smaller than $1 - \beta$,
    \begin{equation*}
        f(W,S) \leq \frac{1}{n} \left[ \log \left(\frac{\rmd \bP_W^S}{\rmd \bQ}(W) \right) + \log \frac{\Delta}{\beta} \right].
    \end{equation*}
\end{theorem}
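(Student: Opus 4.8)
The plan is to run the classical change-of-measure plus Markov-inequality argument for PAC-Bayes bounds, but to replace the one step where \citet{rivasplata2020pac} invoke the identity $\nicefrac{\rmd \bP_W^S}{\rmd \bQ_W} = \big( \nicefrac{\rmd \bQ_W}{\rmd \bP_W^S} \big)^{-1}$ by a one-sided Radon--Nikodym estimate, so that only the hypothesis $\bP_W^S \ll \bQ_W$ a.s.\ is ever used. Writing $h \coloneqq \frac{\rmd \bP_W^S}{\rmd \bQ_W}$, I would introduce the nonnegative random variable
\begin{equation*}
    X \coloneqq \frac{e^{n f(W,S)}}{h(W)},
\end{equation*}
which is finite almost surely, since $\bP_W^S \ll \bQ_W$ a.s.\ forces $h(W) > 0$ with probability one. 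The complement of the claimed event is exactly $\big\{ X > \Delta / \beta \big\}$, so by Markov's inequality it is enough to show $\bE[X] \leq \Delta$; then $\bP\big[ X > \Delta/\beta \big] \leq \beta \, \bE[X]/\Delta \leq \beta$. The degenerate cases are immediate: if $\Delta = \infty$ the asserted inequality is vacuous, and $\Delta = 0$ is impossible because $e^{n f(W',S)} > 0$ almost surely.

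The heart of the argument is $\bE[X] \leq \Delta$. Conditioning on $S$ and applying the Radon--Nikodym change of variables,
\begin{align*}
    \bE\big[ X \mid S \big]
    &= \int \frac{e^{n f(w,S)}}{h(w)} \, \rmd \bP_W^S(w)
     = \int \frac{e^{n f(w,S)}}{h(w)} \, h(w) \, \rmd \bQ_W(w) \\
    &= \int_{\{ h > 0 \}} e^{n f(w,S)} \, \rmd \bQ_W(w)
     \leq \int e^{n f(w,S)} \, \rmd \bQ_W(w).
\end{align*}
Taking the expectation over $S$ and using that $W' \sim \bQ_W$ is independent of $S$ yields $\bE[X] \leq \bE\big[ e^{n f(W',S)} \big] = \Delta$, which is the required bound. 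Putting the two pieces together shows the displayed inequality holds with probability at least $1 - \beta$; note that the reverse absolute continuity $\bQ_W \ll \bP_W^S$ was never used, which is the whole point of the extension.

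The only point that needs care — and it is precisely the device by which \citet{begin2014pac} removed the analogous hypothesis in the bounded-divergence setting — is the middle equality in the display above. Since $h^{-1}$ is defined only off the $\bP_W^S$-null set $\{ h = 0 \}$, one must justify rewriting $\int (e^{nf}/h)\, \rmd \bP_W^S$ as $\int (e^{nf}/h)\, h \, \rmd \bQ_W$. This is legitimate once one fixes an arbitrary convention for $h^{-1}$ on $\{ h = 0 \}$ (say $h^{-1} \coloneqq 0$ there): then the product $(e^{nf}/h) \cdot h$ equals $e^{n f} \, \mathds{1}_{\{ h > 0 \}}$ everywhere on $\cW$, and the standard formula $\int \phi \, \rmd \bP_W^S = \int \phi \, h \, \rmd \bQ_W$ for nonnegative measurable $\phi$ applies verbatim with $\phi = e^{nf}/h$. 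I expect this measure-theoretic bookkeeping to be the only non-routine step; joint measurability of $(w,s) \mapsto h(w)$, hence of $X$, may be taken for granted in this framework.
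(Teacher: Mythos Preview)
Your argument is correct and is essentially identical to the paper's: both define the nonnegative random variable $X = e^{nf(W,S)}/h(W)$ with $h = \rmd\bP_W^S/\rmd\bQ_W$, bound $\bE[X]$ by $\Delta$ via the change-of-measure $\rmd\bP_W^S = h\,\rmd\bQ_W$, and finish with Markov's inequality. The paper in fact writes $\bE[X] = \Delta$ whereas you write $\bE[X] \leq \Delta$ and carefully justify the restriction to $\{h>0\}$; your version is the cleaner one, since the paper's second equality silently assumes $e^{-\log h(W')}h(W') = 1$ on all of $\cW$, which is only meaningful where $h>0$.
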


\begin{proof}
Consider the non-negative random variable
\begin{equation*}
    X = e^{n f(W,S) - \log \frac{\rmd \bP_W^S}{\rmd \bQ}(W)}.
\end{equation*}
Using a change of measure we have that
\begin{align*}
    &\bE \left[ \bE^S \left[ e^{n f(W,S) - \log \frac{\rmd \bP_W^S}{\rmd \bQ}(W)} \right] \right] \\
    & \quad= \bE \left[ \bE^S \left[ e^{n f(W',S) - \log \frac{\rmd \bP_W^S}{\rmd \bQ}(W')} \cdot \frac{\rmd \bP_W^S}{\rmd \bQ}(W') \right] \right] \\
    & \quad = \bE \left[ e^{n f(W',S)} \right].
\end{align*}
Then, applying Markov's inequality to the random variable $X$ we have that
\begin{align*}
    \bP \left[ e^{n f(W,S) - \log \frac{\rmd \bP_W^S}{\rmd \bQ}(W)} \geq \frac{1}{\beta} \cdot  \bE \left[ e^{n f(W',S)} \right]\right] \leq \beta.
\end{align*}

Since the logarithm is a non-decreasing, monotonic function we can take the logarithm to both sides of the inequality and re-arrange the terms to obtain the desired result.
\end{proof}

\end{document}